\documentclass[nohyperref]{article}

\usepackage[dvipsnames]{xcolor}

\usepackage[accepted]{icml2022}

\usepackage{times}
\usepackage[utf8]{inputenc} %
\usepackage[T1]{fontenc}    %
\usepackage{hyperref}       %
\usepackage{url}            %
\usepackage{booktabs}       %
\usepackage{amsfonts}       %
\usepackage{nicefrac}       %
\usepackage{microtype}      %

\usepackage{amsmath}
\usepackage{amssymb}
\usepackage{amsthm}
\usepackage{bm}
\usepackage{dsfont}
\usepackage{mathtools}
\usepackage[capitalize,noabbrev]{cleveref}
\usepackage[group-separator={\,},group-minimum-digits={3}]{siunitx}
\usepackage{subcaption}
\usepackage{thmtools}
\usepackage{thm-restate}
\usepackage{wrapfig}
\usepackage{tikz}
\usetikzlibrary{arrows,intersections,calc,shapes,positioning}

\usepackage[textsize=tiny]{todonotes}

\theoremstyle{plain}

\usepackage{definitions}

\allowdisplaybreaks[1]
\hypersetup{
  colorlinks=true,
  linkcolor={red!50!black},
  citecolor={green!50!black},
  urlcolor={blue!80!black}
}

\icmltitlerunning{On the Surrogate Gap between Contrastive and Supervised Losses}

\begin{document}

\twocolumn[
\icmltitle{On the Surrogate Gap between Contrastive and Supervised Losses}

\icmlsetsymbol{equal}{*}

\begin{icmlauthorlist}
\icmlauthor{Han Bao}{equal,ut,riken}
\icmlauthor{Yoshihiro Nagano}{equal,ut,riken}
\icmlauthor{Kento Nozawa}{equal,ut,riken}
\end{icmlauthorlist}

\icmlaffiliation{ut}{The University of Tokyo, Tokyo, Japan}
\icmlaffiliation{riken}{RIKEN AIP, Tokyo, Japan}

\icmlcorrespondingauthor{Han Bao (currently with Kyoto University)}{bao@i.kyoto-u.ac.jp}

\icmlkeywords{contrastive learning,representation learning}

\vskip 0.3in
]

\printAffiliationsAndNotice{\icmlEqualContribution} %

\begin{abstract}
Contrastive representation learning encourages data representation to make semantically similar pairs closer than randomly drawn negative samples, which has been successful in various domains such as vision, language, and graphs.
Recent theoretical studies have attempted to explain the benefit of the large negative sample size by upper-bounding the downstream classification loss with the contrastive loss.
However, the previous surrogate bounds have two drawbacks: they are only legitimate for a limited range of negative sample sizes and prohibitively large even within that range.
Due to these drawbacks, there still does not exist a consensus on \emph{how negative sample size theoretically correlates with downstream classification performance}.
Following the simplified setting where positive pairs are drawn from the true distribution (not generated by data augmentation; as supposed in previous studies),
this study establishes surrogate upper and lower bounds for the downstream classification loss for all negative sample sizes that best explain the empirical observations on the negative sample size in the earlier studies.
Our bounds suggest that the contrastive loss can be viewed as a surrogate objective of the downstream loss and larger negative sample sizes improve downstream classification because the surrogate gap between contrastive and supervised losses decays.
We verify that our theory is consistent with experiments on synthetic, vision, and language datasets.
\end{abstract}

\section{Introduction}

\begin{figure}[t]
  \centering
  \includegraphics[width=0.85\linewidth]{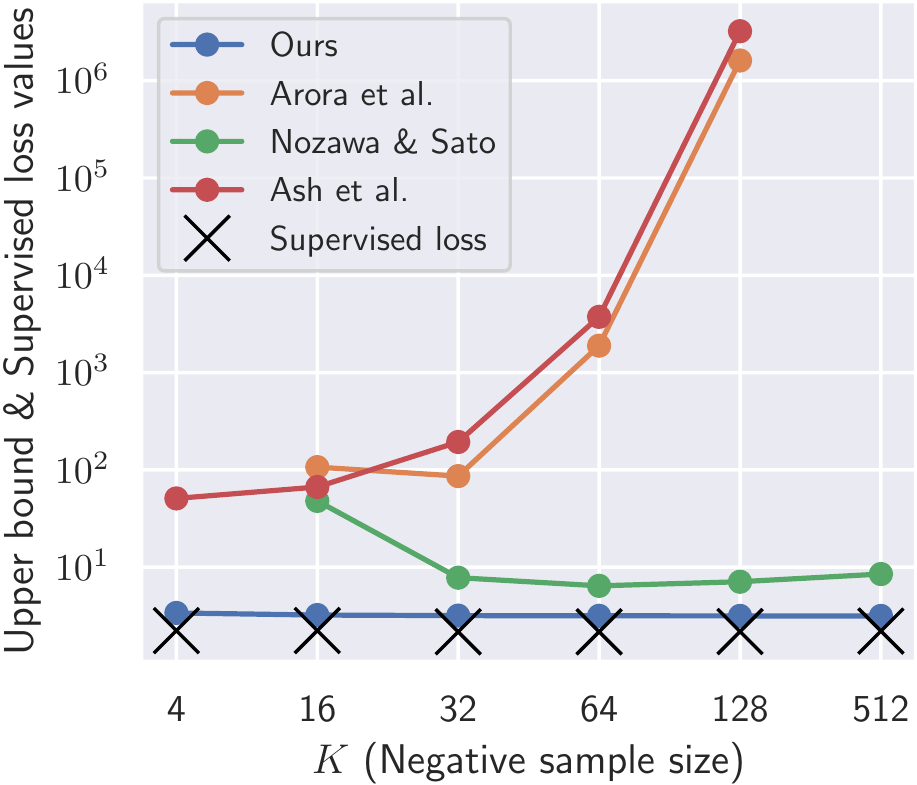}
  \caption{
    Empirical comparison of our upper bound and the existing bounds of the classification loss with CIFAR-10 ($C = \num{10}$),
    showing that \emph{we improve the classification loss bound by exponentially large margins}.
    \citeauthor{Arora2019ICML}'s and \citeauthor{Nozawa2021NeurIPS}'s bounds are valid only at $K + 1 \ge C$.
    Note that \citeauthor{Arora2019ICML}'s and \citeauthor{Ash2022AISTATS}'s bounds become infinity at $K = \num{512}$,
    where $K$ and $C$ are the numbers of negative samples and classes, respectively.
    As can be seen, our bound is the closest approximator of the true mean supervised losses.
    The detailed setup is in \cref{section:large_experiments}.
  }
  \label{figure:upper_bound_comparison_cifar10}
\end{figure}

The contrastive loss~\citep{Chopra2005CVPR} is one of the popular loss functions in metric learning~\citep{Kulis2012} and representation learning~\citep{Bengio2013IEEE}.
The contrastive loss forces data representation of semantically similar pairs closer in some metric space than multiple random samples, called negative samples.
Many state-of-the-art representation learning algorithms use a type of contrastive losses
in natural language processing~\citep{Mikolov2013NeurIPS,Logeswaran2018ICLR},
vision~\citep{Chopra2005CVPR,He2019CVPR,Chen2020ICML},
and graph~\citep{Kirong2021arXiv} domains.
A simple model built on top of the learned representation can achieve almost the same accuracy as supervised learning does.

Recent empirical studies observed that downstream classification performance could be improved with a sufficiently large negative sample size (denoted by $K$), compared with the number of classes (denoted by $C$)~\citep{He2020CVPR,Chen2020ICML}.
To better understand the underlying mechanism of this large $K$ benefit, several studies attempted to derive surrogate upper bounds of the downstream classification loss by the contrastive loss.
\citet{Arora2019ICML} successfully established the first upper bound, which exponentially deteriorates with larger $K$ because the labels of negative samples frequently collide with the positive sample (called \emph{label collision}), contradicting the larger-$K$ benefit.
By contrast, \citet{Nozawa2021NeurIPS} argued that supervised classes could be \emph{covered} by negative samples with higher probability as $K$ becomes larger (called \emph{label coverage}), supported by their bound.
While their claim agrees with the large-$K$ benefit, their bound holds only when $K > C+1$, and hence does not explain the empirical observation that contrastive learning works to some extent even with small $K$~\citep{Chen2021NeurIPS,Tomasev2022arXiv}.
Furthermore, \citet{Ash2022AISTATS} advocated the existence of \emph{collision-coverage trade-off} so that their upper bound has an \emph{optimal} $K$.
Why have we yet to reach a consensus?
We observe that the lack of a consensus is due to the existing upper bounds having the following drawbacks: some bounds are only valid within a limited range of $K$, and prohibitively large even within that range.
\cref{figure:upper_bound_comparison_cifar10} shows the comparison of the existing bounds and the classification loss.
Hence, we ask the following research question: \emph{How does negative sample size $K$ affect the downstream classification performance?}

In this study, we derive a surrogate gap bound of the downstream classification loss that is applicable to any $K$ and shrinks with larger $K$.
In particular, we derive not only the upper (\cref{theorem:curl_upper_bound}) but also the \textit{lower} bound (\cref{theorem:curl_lower_bound}) of the downstream loss, and show the tightness in $K$.
As the gap between upper and lower bounds shrinks in $O(K^{-1})$, the contrastive loss can be viewed as a surrogate objective of the downstream classification loss, and the downstream performance is improved by larger $K$.
This is consistent with the empirical observations that larger $K$ improves the downstream performance whereas contrastive learning can work to some extent even with small $K$ (\cref{section:discussion}).
In addition, our bounds resolve the controversy among the existing bounds so that there is no collision-coverage trade-off in $K$ and the large-$K$ benefit is witnessed (\cref{section:comparison_with_existing}).
Finally, we empirically verify our theory by experiments (\cref{section:experiments}) on a synthetic dataset, CIFAR-10/100~\citep{Krizhevsky2009techrep} datasets, and Wiki-3029 dataset~\citep{Arora2019ICML}.
Note that we assume that positive pairs are drawn from the true underlying distribution instead of generated by data augmentation for simplicity, as supposed in the previous studies.

\section{Formulation of Contrastive Learning}
\label{section:formulation}
First, this section briefly summarizes the problem setup and formulation of contrastive unsupervised representation learning (CURL).%
\footnote{
  We refer to our problem setting as contrastive \emph{unsupervised} representation learning by following \citet{Arora2019ICML} while CURL is provided with the contrastive supervision.
}

\paragraph{Notation.}
The $C$-dimensional vector whose elements are all ones is denoted by $\onebf_C \eqdef [1 \; 1 \; \dots \; 1]^\top$.
When it is clear from context, the subscript is abbreviated.
For a vector $\abf \in \Rbb^p$, $a_{(i)}$ denotes the $i$-th largest element of $\abf$,
namely, $a_{(1)} \ge a_{(2)} \ge \dots \ge a_{(p)}$.
Likewise, $a_{(-i)}$ denotes the $i$-th smallest element of the vector $\abf$.
The indicator function is denoted by $\indicator{A}$ for a predicate $A$.
Let $\triangle^C \eqdef \{\pbf \in [0, 1]^C \mid \pbf^\top\onebf = 1\}$ be the $C$-dimensional probability simplex.
For $\pbf \in \triangle^C$, the Shannon entropy is denoted by $\entropy{\pbf} \eqdef -\sum_{c \in [C]} p_c \ln p_c$.

\paragraph{Supervised classification.}
One of the goals in machine learning is supervised classification,
while we consider the setup where the label supervision is unavailable.
Here, we first formulate $C$-class classification problem for $C \in \Nbb$.
Let $\Xcal$ be $d$-dimensional feature space and $\Ycal \eqdef [C]$ be the supervised label set.
In the supervised setup, we are interested in the following risk quantity, the \emph{supervised loss}, for a multi-class classifier $\gbf: \Xcal \to \Rbb^C$:
\begin{align}
  \suploss(\gbf) \eqdef \E_{\xbf, y \sim \Pbb}\left[ -\ln\frac{e^{g_y(\xbf)}}{\sum_{c \in \Ycal} e^{g_c(\xbf)}} \right],
\end{align}
which is specialized for the softmax cross-entropy loss.
The expectation is taken over the unknown underlying joint distribution $\Pbb$.
Test prediction is given by $\argmax_{y \in \Ycal} g_y(\cdot)$.

\paragraph{Contrastive unsupervised representation learning.}
In the CURL framework~\citep{Arora2019ICML}, we target to learn meaningful data representation by training a similarity model to make the representation of \emph{positive} pairs more similar than randomly drawn $K$ \emph{negative} samples.
The class-conditional distribution is denoted by $\Dcal_c \eqdef \Pbb(X \mid Y = c)$ for each $c \in \Ycal$
and the class-prior distribution by $\pibf \eqdef [\Pbb(Y = c)]_{c \in \Ycal} \in \triangle^C$.
The data generating process is described as follows:
(i)~draw positive/negative classes: $c^+$, $\{c^-_k\}_{k=1}^K \sim \pibf^{K+1}$
(ii)~draw an anchor sample $\xbf \sim \Dcal_{c^+}$
(iii)~draw a positive sample $\xbf^+ \sim \Dcal_{c^+}$%
(iv)~draw $K$ negative samples $\xbf^-_k \sim \Dcal_{c^-_k}$ (for each $k \in [K]$).

In CURL, the representation is learned through minimization of the following \emph{contrastive loss} $\nceloss(\fbf)$
\begin{align}
  \E_{\substack{c^+, \{c^-_k\} \\ \xbf, \xbf^+, \{\xbf^-_k\}}} \!\!\! \biggl[
    -\ln\frac{e^{\fbf(\xbf)^\top\fbf(\xbf^+)}}{e^{\fbf(\xbf)^\top\fbf(\xbf^+)} + \sum_{k \in [K]} e^{\fbf(\xbf)^\top\fbf(\xbf^-_k)}}
  \biggr].
  \label{eq:contrastive_loss}
\end{align}

\paragraph{Evaluation of representations.}
Now we specify our model of classifiers to evaluate learned representations.
A multi-class classifier $\gbf: \Xcal \to \Rbb^C$ consists of learned representation $\fbf: \Xcal \to \Rbb^h$ (frozen) and linear parameters $\Wbf \in \Rbb^{C \times h}$ as
$\gbf(\cdot) \eqdef \Wbf \fbf(\cdot)$,
where $h \in \Nbb$ denotes the dimensionality of the representation given in advance.

For the sake of evaluation, a specific linear classifier called \emph{mean classifier} is introduced.
Given representation $\fbf$, the mean classifier $\Wbf^\mu$ is defined as $\Wbf^\mu \eqdef \left[ \mubf_1 \cdots \mubf_C \right]^\top$, where $\mubf_c \eqdef \E_{\xbf \sim \Dcal_c}[\fbf(\xbf)]$.
This will later be used for evaluating the representation $\fbf$ combined with the supervised loss,
which is denoted by $\msuploss(\fbf) \eqdef \suploss(\Wbf^\mu\fbf)$.
We call it the \emph{mean supervised loss}.
If the mean supervised loss is successfully bounded from above, we end up a bound on the supervised loss through
$\inf_{\Wbf \in \Rbb^{C \times h}} \suploss(\Wbf \fbf) \leq \msuploss(\fbf)$.
For this reason, an upper bound on $\msuploss$ is an intermediate milestone that we seek in this paper.

\section{Surrogate Bounds for Contrastive Learning}
\label{section:learning_bounds}

In this section, our main theoretical results are provided.
We aim at showing that the contrastive loss $\nceloss(\fbf)$ serves as a good estimator of the mean supervised loss $\msuploss(\fbf)$ for any $\fbf$.
We show this by establishing upper and lower bounds of $\msuploss(\fbf)$ by $\nceloss(\fbf)$.
Eventually, the minimization of $\nceloss(\fbf)$ may lead to a good minimizer of $\msuploss(\fbf)$.
All proofs are provided in \cref{appendix:proofs}.

\subsection{Assumptions}
\label{section:assumtions}

Before proceeding with the main results, we explicitly state assumptions used throughout this paper and discuss their validity.

\paragraph{Conditional independence.}
In \cref{section:formulation}, we assumed that anchor and positive samples are conditionally independent: $\xbf \indep \xbf^+ \mid c^+$,
whereas data augmentation (DA) is commonly combined with contrastive learning, and the assumption no longer holds.
While a concurrent work~\citep{Wang2022ICLR} attempted to mitigate this assumption, we work with this assumption to concentrate on the theoretical relationship between $K$ and the downstream performance.
With this assumption, it is possible to compare our result with the previous bounds in a relatively fair manner since the previous studies assumed the same assumption~\citep{Arora2019ICML,Nozawa2021NeurIPS,Ash2022AISTATS}.
The conditional independence assumption has been used in metric learning~\citep{Bellet2012ICML} and weakly-supervised learning~\citep{Bao2018ICML}.
In \cref{appendix:relaxation}, we discuss how to relax the conditional independence assumption.

\paragraph{Existence of supervised classes.}
In unsupervised representation learning, the latent classes $[C]$ and the downstream supervised classes $\Ycal$ are often distinguished.
To draw the connection between learned representation and downstream classification, we must suppose the relationship between $[C]$ and $\Ycal$.
We assume $\Ycal = [C]$ for ease of exposition in the main part.
This assumption can be relaxed to some extent (similarly to \citet{Arora2019ICML}), which will be discussed in \cref{appendix:relaxation}.

\paragraph{Bounded feature representation.}
The size of the representation $\|\fbf(\xbf)\|_2$ is assumed to be bounded.
This assumption is reasonable from the experimental perspective since it is common to normalize representation to employ the cosine similarity as the similarity metric.
Several works reported that the normalized embeddings improve the performance~\citep{Chen2020ICML,Wang2020ICML}.
The existing theoretical work~\citep{Arora2019ICML} also assumes the bounded feature extractor.
Unlike the existing analyses (reviewed in \cref{section:comparison_with_existing}), we take advantage of this assumption to derive the sharp bounds.

\paragraph{Loss function.}
We focus on the cross-entropy-type contrastive loss \eqref{eq:contrastive_loss} because this is the most commonly used loss function~\citep{Mikolov2013NeurIPS,Logeswaran2018ICLR,Chen2020ICML} and its connection to mutual information has been actively discussed~\citep{Tian2020NeurIPS,Tschannen2020ICLR}, while other contrastive loss functions have been proposed in a few recent studies~\citep{Li2021NeurIPS,HaoChen2021NeurIPS,Chuang2022arXiv}.

\subsection{Main Results}
\label{section:main_results}

Below, we investigate the \emph{surrogate gap} $\msuploss(\fbf) - \nceloss(\fbf)$ for a fixed representation $\fbf$.
If the surrogate gap is bounded sufficiently small, the contrastive loss $\nceloss(\fbf)$ can be regarded a good surrogate objective for $\msuploss(\fbf)$.

First, we show a sharp upper bound of the mean supervised loss.
Unlike the existing surrogate bounds of CURL, the upper bound obtained here has a constant coefficient in the contrastive loss and is applicable for all $C$ and $K$ (see discussions in \cref{section:comparison_with_existing}).%

\begin{restatable}{theorem}{thmcurlupperbound}
  \label{theorem:curl_upper_bound}
  For all $\fbf$ such that $\|\fbf(\xbf)\|_2 \le L$ ($\forall \xbf \in \Xcal$), the following inequality holds.
  \begin{align}
    \msuploss(\fbf) \le \nceloss(\fbf) + \Delta_\mathrm{U},
  \end{align}
  where
  $\Delta_\mathrm{U} \eqdef \ln\{\pi_{(1)}K^{-1}C^2\cosh^2(L^2)\}$.
\end{restatable}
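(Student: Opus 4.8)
The plan is to bound the surrogate gap $\msuploss(\fbf)-\nceloss(\fbf)$ by first integrating out the \emph{sample}-level randomness (the positive $\xbf^+$ and the negatives $\xbf^-_k$) so that only the class means $\mubf_c$ survive, and then reducing everything to a deterministic inequality about a log-sum-exp over a random subsample of the $C$ classes.

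\emph{Step 1: Jensen to expose the class means.} Rewrite $\nceloss(\fbf)=\E[\ln(e^{\fbf(\xbf)^\top\fbf(\xbf^+)}+\sum_k e^{\fbf(\xbf)^\top\fbf(\xbf^-_k)})-\fbf(\xbf)^\top\fbf(\xbf^+)]$ and observe that, for a fixed $\fbf(\xbf)$, this is a convex function of $(\fbf(\xbf^+),\fbf(\xbf^-_1),\dots,\fbf(\xbf^-_K))$ (a log-sum-exp of affine maps plus a linear term). Conditioning on $c^+,\{c^-_k\},\xbf$ and using conditional independence so that $\E[\fbf(\xbf^+)\mid c^+,\xbf]=\mubf_{c^+}$ and $\E[\fbf(\xbf^-_k)\mid c^-_k]=\mubf_{c^-_k}$, Jensen's inequality gives $\nceloss(\fbf)\ge\E[\ln(e^{\fbf(\xbf)^\top\mubf_{c^+}}+\sum_k e^{\fbf(\xbf)^\top\mubf_{c^-_k}})-\fbf(\xbf)^\top\mubf_{c^+}]$. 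Since $\msuploss(\fbf)=\suploss(\Wbf^\mu\fbf)=\E[\ln(\sum_c e^{\fbf(\xbf)^\top\mubf_c})-\fbf(\xbf)^\top\mubf_{c^+}]$, the linear terms $\fbf(\xbf)^\top\mubf_{c^+}$ cancel in the gap.

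\emph{Step 2: reduction to a pointwise inequality.} Dropping the nonnegative term $e^{\fbf(\xbf)^\top\mubf_{c^+}}$ inside the contrastive logarithm only enlarges the gap, and afterwards neither remaining term depends on $c^+$, so $\msuploss(\fbf)-\nceloss(\fbf)\le\E_\xbf[\ln\sum_{c\in[C]}t_c-\E_{\{c^-_k\}}\ln\sum_{k=1}^K t_{c^-_k}]$, where $t_c\eqdef e^{\fbf(\xbf)^\top\mubf_c}$ and the $c^-_k$ are drawn i.i.d.\ from $\pibf$. It therefore suffices to show, for every fixed $\xbf$, that $\ln\sum_c t_c-\E_{\{c^-_k\}}[\ln\sum_k t_{c^-_k}]\le\Delta_\mathrm{U}$. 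This is the only place the bounded-representation assumption enters: $\|\mubf_c\|_2\le\E_{\xbf\sim\Dcal_c}\|\fbf(\xbf)\|_2\le L$, hence $|\fbf(\xbf)^\top\mubf_c|\le L^2$ by Cauchy--Schwarz, so $t_c\in[m,M]$ with $m\eqdef e^{-L^2}$, $M\eqdef e^{L^2}$ and $Mm=1$.

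\emph{Step 3: the deterministic inequality via AM--HM and Kantorovich.} Set $S\eqdef\sum_k t_{c^-_k}>0$. Concavity of $\ln$ gives $\E[\ln S]=-\E[\ln(1/S)]\ge-\ln\E[1/S]$, and the AM--HM inequality gives the pathwise bound $1/S\le K^{-2}\sum_k t_{c^-_k}^{-1}$, whence $\E[1/S]\le K^{-1}\sum_c\pi_c t_c^{-1}\le K^{-1}\pi_{(1)}\sum_c t_c^{-1}$. Combining, $\ln\sum_c t_c-\E[\ln S]\le\ln(K^{-1}\pi_{(1)}(\sum_c t_c)(\sum_c t_c^{-1}))$, and the Kantorovich (Schweitzer) inequality $(\tfrac1C\sum_c t_c)(\tfrac1C\sum_c t_c^{-1})\le\frac{(M+m)^2}{4Mm}=\cosh^2(L^2)$ turns this into $\ln(\pi_{(1)}K^{-1}C^2\cosh^2(L^2))=\Delta_\mathrm{U}$; taking $\E_\xbf$ finishes the proof. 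I expect Step 3 to be the crux: the $K$ negatives see only a random and, for small $K$, severely incomplete subsample of the classes, so a naive ``coverage'' argument loses a multiplicative factor that blows up with $L$. Routing through reciprocals is exactly what lets Kantorovich's sharp constant $\tfrac{(M+m)^2}{4Mm}$ absorb the boundedness and yield a bound that is valid and $O(K^{-1})$-tight for all $K$.
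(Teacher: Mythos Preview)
Your proof is correct and is, at its core, the paper's own argument in slightly different packaging. The paper's workhorse is its Lemma~A.1, $2\ln N\le\LSE{\zbf}+\LSE{-\zbf}\le 2\ln(N\cosh(L^2))$; after exponentiating with $t_i=e^{z_i}$ this is precisely the pair of inequalities you invoke, namely AM--HM for the lower bound $(\sum_i t_i)(\sum_i t_i^{-1})\ge N^2$ and Kantorovich/Schweitzer for the upper bound $(\sum_i t_i)(\sum_i t_i^{-1})\le N^2\tfrac{(M+m)^2}{4Mm}=N^2\cosh^2(L^2)$. The paper applies the lower half pointwise to flip $\ln S=\LSE{z}$ into $2\ln K-\LSE{-z}$ and then uses Jensen to push the $\{c^-_k\}$-expectation inside the now-concave $-\LSE{-z}$, whereas you use Jensen first ($\E[\ln S]\ge-\ln\E[1/S]$) and then AM--HM pathwise on $1/S$; both orderings land on the identical bound $\E[\ln S]\ge\ln K-\ln\sum_c\pi_c t_c^{-1}$, after which the paper's second flip via the upper half of Lemma~A.1 is exactly your Kantorovich step. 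The only other cosmetic difference is that you apply Jensen jointly over $(\xbf^+,\{\xbf^-_k\})$ in Step~1 (using convexity of the full integrand) before dropping the positive term, while the paper drops the positive term first and applies Jensen over the negatives alone---equivalent orderings that yield the same constant $\Delta_\mathrm{U}$.
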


Next, the lower bound of the mean supervised loss is provided.
While the existing theoretical analyses often provided upper bounds with a huge coefficient in the contrastive loss,
our lower bound provided below has the same constant coefficient and intercept ($\Delta_\mathrm{U}$ and $\Delta_\mathrm{L}$) rate as our upper bound, ensuring the tightness of our analysis.

\begin{restatable}{theorem}{thmcurllowerbound}
  \label{theorem:curl_lower_bound}
  For all $\fbf$ such that $\|\fbf(\xbf)\|_2 \le L$ ($\forall \xbf \in \Xcal$), the following inequality holds.
  \begin{align}
    \msuploss(\fbf) \ge \nceloss(\fbf) + \Delta_\mathrm{L},
  \end{align}
  where
  $\Delta_\mathrm{L} \eqdef \entropy{\pibf} + \ln\frac{K}{(K + 1)^2} - 2\ln\cosh(L^2)$.
\end{restatable}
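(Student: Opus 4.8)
The plan is to lower-bound $\msuploss(\fbf)$ by manipulating the contrastive loss from below, mirroring the upper-bound argument but reversing each inequality. I would start by writing out the mean supervised loss explicitly: $\msuploss(\fbf) = \E_{\xbf, y}[-\ln \tfrac{e^{\mubf_y^\top \fbf(\xbf)}}{\sum_{c} e^{\mubf_c^\top \fbf(\xbf)}}]$, and decompose the conditioning on $y = c^+$ so that this becomes $\E_{c^+}\E_{\xbf \sim \Dcal_{c^+}}[\ln(1 + \sum_{c \neq c^+} e^{(\mubf_c - \mubf_{c^+})^\top \fbf(\xbf)})]$. The goal is to relate each inner logarithm to the corresponding term in $\nceloss(\fbf)$, which (after the same conditioning) is $\E[\ln(1 + \sum_{k} e^{(\fbf(\xbf^-_k) - \fbf(\xbf^+))^\top \fbf(\xbf)})]$ over the draws of classes and samples.

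The key algebraic move is Jensen's inequality applied to push the sample-level expectations inside the logarithm in the \emph{opposite} direction from the upper bound: in the contrastive loss I would use convexity of $t \mapsto \ln(1 + e^t)$ (or concavity of $t\mapsto -\ln(1+e^t)$, i.e.\ $x \mapsto \ln(1+x)$ concavity) to move $\E_{\xbf^+ \sim \Dcal_{c^+}}$ and $\E_{\xbf^-_k \sim \Dcal_{c^-_k}}$ inward, thereby replacing $\fbf(\xbf^+)$ and $\fbf(\xbf^-_k)$ by their class means $\mubf_{c^+}$ and $\mubf_{c^-_k}$ — at the cost of a correction term controlled by the boundedness $\|\fbf\|_2 \le L$, which is where the $\cosh(L^2)$ factors enter (since $|\fbf(\xbf^+)^\top\fbf(\xbf) - \mubf_{c^+}^\top\fbf(\xbf)| \le 2L^2$ and similarly for negatives, and $\E[e^{t}] \ge e^{\E[t]}$ combined with Hoeffding-type bounds on the moment generating function give the $\cosh$). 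Then I must compare the class-index structure: the contrastive loss sums over $K$ negatives $c^-_k$ drawn i.i.d.\ from $\pibf$ (with collisions and missing classes possible), whereas the mean supervised loss sums over \emph{all} $C-1$ classes $c \neq c^+$ exactly once. Bridging this requires a counting/coverage argument: condition on the multiset $\{c^-_k\}$, and lower-bound $\E_{\{c^-_k\}}[\ln(1 + \sum_k e^{\cdots})]$ by relating it to $\ln(1 + \sum_{c} (\text{number of times } c \text{ appears}) \cdot e^{\cdots})$, then extract $\ln K$ and $\ln \tfrac{1}{(K+1)^2}$ via $\E[\#\{k : c^-_k = c\}] = K\pi_c$ and another application of Jensen. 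The entropy term $\entropy{\pibf}$ should emerge from $\E_{c^+}[\ln \pi_{c^+}^{-1}]$-type quantities — likely $-\sum_c \pi_c \ln \pi_c$ appears when lower-bounding a sum like $\sum_c e^{\cdots}$ against $\sum_c \pi_c e^{\cdots}/\pi_c$, again by Jensen / weighted AM.

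The main obstacle I anticipate is the class-combinatorics step — going from the $K$-negative-sample sum (with random multiplicities from a multinomial) to the clean $(C-1)$-term supervised sum in the \emph{right direction} for a lower bound. For the upper bound one can afford to be loose (drop collisions, union-bound coverage), but for a \emph{matching} lower bound of the same $O(K^{-1})$ rate one needs the expected multiplicities $K\pi_c$ to appear sharply, which forces a careful interchange of $\E_{\{c^-_k\}}$ with the logarithm and a precise accounting of the $1+(\cdot)$ normalizer (the "$+1$" from the positive term is what produces $(K+1)^2$ rather than $K^2$ in the denominator). I would handle this by first proving a scalar lemma: for nonnegative $a_1,\dots,a_K$ and $p \in \triangle^C$, $\E_{c^-_{1:K} \sim p^{\otimes K}}[\ln(1 + \sum_k a_{c^-_k})] \ge \ln(1 + \tfrac{K}{K+1}\sum_c p_c a_c) - (\text{lower-order})$ or similar, isolate the constants, and only then substitute $a_c = e^{(\mubf_c - \mubf_{c^+})^\top\fbf(\xbf)}$ and take the remaining expectations over $c^+$ and $\xbf$. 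The $\cosh(L^2)$ bookkeeping and the entropy extraction are then routine given the boundedness assumption and convexity of the exponential.
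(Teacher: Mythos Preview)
Your proposal misses the central idea of the paper's proof and, as written, has a genuine gap around the entropy term and the direction of the inequalities.

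The paper does \emph{not} attempt to compare the two log-sum-exp expressions in $\msuploss$ and $\nceloss$ directly. Instead, its first move is to \emph{linearize} $\msuploss$ via the Fenchel dual of the log-sum-exp: since $\LSE{\zbf} = \sup_{\pbf \in \triangle^C}\{\pbf^\top\zbf + \entropy{\pbf}\}$, choosing $\pbf = \pibf$ immediately gives
\[
  \msuploss(\fbf) \;\ge\; \E_{c^+,\xbf}\Bigl[-\fbf(\xbf)^\top\mubf_{c^+} + \sum_{c}\pi_c\,\fbf(\xbf)^\top\mubf_c\Bigr] + \entropy{\pibf}.
\]
This single step accomplishes two things you struggle with: it produces $\entropy{\pibf}$ for free, and it replaces the $(C-1)$-term log-sum by a \emph{linear} expectation, which (after conditional independence) is exactly $\E[-\fbf(\xbf)^\top\fbf(\xbf^+) + K^{-1}\sum_k \fbf(\xbf)^\top\fbf(\xbf^-_k)]$. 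From there one only needs Jensen on $-\ln$ to repackage the average into a single $\ln(K^{-1}\sum_k \cdots)$, add back the positive term (costing $\ln K$), and then apply a deterministic sign-flip lemma $\LSE{\tilde\zbf} + \LSE{-\tilde\zbf} \le 2\ln((K{+}1)\cosh(L^2))$ to turn $\exp(-\fbf(\xbf)^\top\cdot)$ into $\exp(+\fbf(\xbf)^\top\cdot)$. That lemma, not a Hoeffding MGF bound, is where $\cosh(L^2)$ and the $(K+1)^2$ appear.

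Your plan of pushing expectations through $\ln(1+\cdot)$ and then invoking a ``scalar lemma'' on $\E_{c^-_{1:K}}[\ln(1+\sum_k a_{c^-_k})]$ runs into a direction problem: to prove $\msuploss \ge \nceloss + \Delta_\mathrm{L}$ you must lower-bound a $(C{-}1)$-term log-sum by a $K$-term one, and for $K > C$ the contrastive sum is \emph{larger} term-by-term, so concavity of $\ln(1+\cdot)$ cuts the wrong way. Your sketch of where $\entropy{\pibf}$ should emerge (``$\E_{c^+}[\ln\pi_{c^+}^{-1}]$-type quantities \ldots\ by weighted AM'') is not a mechanism---no such factor appears in either loss as written, and without the Fenchel step there is no natural place for it to enter. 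The Fenchel/variational representation of $\mathrm{LSE}$ is precisely the missing ingredient that resolves both the entropy extraction and the class-combinatorics obstacle simultaneously.
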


Our proofs leverage that the contrastive loss and mean supervised loss share the similar log-sum-exp functional form and directly apply the Jensen's inequality.
This is in contrast to the existing works including \citet{Arora2019ICML}, which approximate the mean supervised loss with the contrastive loss by taking the expectation over latent classes, leading to an exponentially large coefficient.

As we see in \cref{section:discussion}, $\Delta_\mathrm{U}$ and $\Delta_\mathrm{L}$ are the same order in $K$ under the uniform class prior assumption.
By applying either the high-probability bound~\citep{Arora2019ICML} or PAC-Bayesian analysis~\citep{Nozawa2020UAI}, \cref{theorem:curl_upper_bound} (\cref{theorem:curl_lower_bound} as well) can be naturally extended to the form $\msuploss(\widehat\fbf) \le \nceloss(\fbf) + \Delta_\mathrm{U} + \chi$ with a complexity term $\chi$, where $\widehat\fbf$ is the empirical minimizer of the contrastive loss.
Since this is a routine and does not affect the surrogate gap, we omit the high-probability bounds.

\subsection{Discussion}
\label{section:discussion}

\begin{figure}
  \centering
  \resizebox{\linewidth}{!}{%
    \def\ncelossastvalue{2.15072915}
\def\msuplossastvalue{2.0322750988765597}
\def\ubintercept{0.8675616609660546}
\def\lbintercept{-0.9071669155584147}
\def\offsetvalue{0.3}
\def\voffsetvalue{0.1}
\def\xrangevalue{8}
\def\yrangevalue{5}

\begin{tikzpicture}[
    thick,
    >=stealth',
    dot/.style = {
      draw,
      fill = black,
      circle,
      inner sep = 0pt,
      minimum size = 4pt
    },
    font=\LARGE
  ]
  \coordinate (O) at (0,0);
  \coordinate (xunit) at (1,0);
  \coordinate (yunit) at (0,1);
  \coordinate (xrange) at ($\xrangevalue*(xunit)$);
  \coordinate (yrange) at ($\yrangevalue*(yunit)$);
  \coordinate (xoffset) at ($\offsetvalue*(xunit)$);
  \coordinate (yoffset) at ($\offsetvalue*(yunit)$);
  \coordinate (msupupperbound) at ($(\ncelossastvalue,\ncelossastvalue+\ubintercept)$);
  \coordinate (msuplowerbound) at ($(\msuplossastvalue-\lbintercept,\msuplossastvalue)$);
  \coordinate (approachablelimit) at ($(\ncelossastvalue,\msuplossastvalue)$);
  \coordinate (lbmax) at ($(\yrangevalue-\lbintercept,\yrangevalue)$);
  \coordinate (ubmax) at ($(\yrangevalue-\ubintercept,\yrangevalue)$);
  \coordinate (bestpoint) at ($(\ncelossastvalue,\msuplossastvalue)$);

  \draw[->] ($(O)-(xoffset)$) -- ($(xrange)+(xoffset)+(1.9,0)$) coordinate[label = {below:$\nceloss (\fbf)$}] (xmax);
  \draw[->] ($(O)-(yoffset)$) -- ($(yrange)+(yoffset)$) coordinate[label = {left:$\msuploss (\fbf)$}] (xmax);

  \draw[dotted] ($(-\offsetvalue,\msuplossastvalue)$) -- ($(\xrangevalue,\msuplossastvalue)+(1.9,0)$);
  \draw ($(-\offsetvalue,\msuplossastvalue)$) node[left] {$\msuplossast$};

  \draw[dotted] ($(\ncelossastvalue,-\offsetvalue)$) -- ($(\ncelossastvalue,\yrangevalue)$);
  \draw ($(\ncelossastvalue,-\offsetvalue)$) node[below] {$\ncelossast$};

  \draw[dashdotted] ($(-\offsetvalue-\lbintercept,-\offsetvalue)$) -- (lbmax);
  \node[below right=1 and -0.8 of lbmax] {$\msuploss (\fbf) \ge \nceloss (\fbf) + \Delta_\mathrm{L}$};
  \draw[->, out=220, in=320] ([xshift=-10pt, yshift=-55pt]lbmax) to ([xshift=-50pt, yshift=-55pt]lbmax);

  \draw ($(-\offsetvalue,-\offsetvalue+\ubintercept)$) -- (ubmax);
  \node[above=0.8 of ubmax] {$\msuploss (\fbf) \le \nceloss (\fbf) + \Delta_\mathrm{U}$};
  \draw[->, out=250, in=110] ([yshift=25pt]ubmax) to ([yshift=2pt]ubmax);

  \draw[dotted] ($(0,\msuplossastvalue+\ubintercept+\voffsetvalue)$) -- ($(\ncelossastvalue,\msuplossastvalue+\ubintercept+\voffsetvalue)$);
  \draw ($(0,\msuplossastvalue+\ubintercept+\voffsetvalue)$) node[left] {$\ncelossast + \Delta_{\mathrm{U}}$};

  \draw[dotted] ($(\ncelossastvalue-\lbintercept-\voffsetvalue,0)$) -- ($(\ncelossastvalue-\lbintercept-\voffsetvalue,\msuplossastvalue)$);
  \draw ($(\ncelossastvalue-\lbintercept-0.4,0)$) node[below right] {$\msuplossast - \Delta_{\mathrm{L}}$};

  \fill (msupupperbound) circle (0.13);
  \node[draw,rectangle,fill=black,inner sep=3pt] at (msuplowerbound) {};
  \node[draw,star,star points=5,star point ratio=2.25,fill=black,inner sep=1.5pt] at (bestpoint) {};

  \fill[opacity=.1] (approachablelimit) -- (msuplowerbound) -- (lbmax) -- (ubmax) -- (msupupperbound);
\end{tikzpicture}
  }
  \caption{
    The surrogate bounds and feasible region.
    The point \mystar, $(\ncelossast,\msuplossast)$, is the optimal point in the feasible region.
    The points \mybullet and \mysquare are mentioned in the texts.
  }
  \label{figure:feasible_region}
\end{figure}

Subsequently, we discuss implications of our main results on the relationship between the mean supervised loss and $K$.
For the sake of simplicity, we assume $\pi_c = \nicefrac{1}{C}$ for all $c \in [C]$ (the uniform class prior) in this section.

\paragraph{Gap between upper and lower bounds.}
Both of our upper (\cref{theorem:curl_upper_bound}) and lower (\cref{theorem:curl_lower_bound}) bounds draw the linear relationship between the mean supervised loss $\msuploss$ and the contrastive loss $\nceloss$,
with the additional intercept terms $\Delta_\mathrm{U}$ and $\Delta_\mathrm{L}$.
Under the uniform class prior assumption, the intercepts are in the same order:
\begin{align*}
  \Delta_\mathrm{U} &= \ln\left(\nicefrac{C}{K}\right) + 2\ln\cosh(L^2) = O \left(\ln\left(\nicefrac{1}{K}\right)\right), \\
  \Delta_\mathrm{L} &= \ln\left(\nicefrac{CK}{(K+1)^2}\right) - 2\ln\cosh(L^2) = O \left(\ln\left(\nicefrac{1}{K}\right)\right),
\end{align*}
and the gap between two bounds $\Delta_\mathrm{U} - \Delta_\mathrm{L}$ is%
\footnote{
  The approximation $\ln(1 + z) \approx z$ is used (for $0 < z \ll 1$).
}
\begin{align}
  4\ln\cosh(L^2) + 2\ln\left(1 + \nicefrac{1}{K}\right) = O\left(K^{-1}\right),
\end{align}
meaning that the gap shrinks to $4\ln\cosh(L^2)$ as $K$ increases.
Hence, our bounds have the tight intercepts, and \emph{the larger $K$ is beneficial for CURL from the viewpoint of the surrogate gap of the mean supervised loss.}

\paragraph{Surrogate bounds and feasible region.}

Next, we consider the $(\nceloss, \msuploss)$-plot, in which a point indicates $(\nceloss(\fbf), \msuploss(\fbf))$ for some $\fbf$ (see \cref{figure:feasible_region}).
Here, let us focus on the feasible region in the $(\nceloss, \msuploss)$-plot by assuming $\|\fbf\|_2 \le L$ for any $\fbf$ (same as \cref{theorem:curl_upper_bound,theorem:curl_lower_bound}).
Then, the mean supervised loss and contrastive loss are essentially lower-bounded by the constants%
\footnote{The derivations of $\msuplossast$ and $\ncelossast$ are detailed in \cref{appendix:essential_bounds_of_losses}.}
\begin{align}
  & \msuplossast \eqdef \ln\{1 + (C - 1)e^{-2L^2}\}, \\
  & \ncelossast \eqdef \sum_{m=0}^K r_{K,C,m} \ln\{1 + m + (K - m)e^{-2L^2}\},
\end{align}
respectively, where $r_{K,C,m} \eqdef \binom{K}{m} \left(\frac{1}{C}\right)^m \left(1 - \frac{1}{C}\right)^{K-m}$.
Hence, the feasible region is
\begin{subequations}
  \begin{align}
    \msuploss(\fbf) &\le \nceloss(\fbf) + \Delta_\mathrm{U},
    \label{eq:feasible_region_learning_upper_bound} \\
    \msuploss(\fbf) &\ge \nceloss(\fbf) + \Delta_\mathrm{L},
    \label{eq:feasible_region_learning_lower_bound} \\
    \msuploss(\fbf) &\ge \msuplossast,
    \label{eq:feasible_region_essential_bound_supv} \\
    \nceloss(\fbf) &\ge \ncelossast,
    \label{eq:feasible_region_essential_bound_cont}
  \end{align}
  \label{eq:feasible_region}
\end{subequations}
as illustrated in \cref{figure:feasible_region}.
The first two bounds~\eqref{eq:feasible_region_learning_upper_bound} and \eqref{eq:feasible_region_learning_lower_bound} restrict the mean supervised loss by the contrastive loss.
We specifically refer to these bounds as \emph{surrogate bounds}.
The remaining two bounds~\eqref{eq:feasible_region_essential_bound_supv} and \eqref{eq:feasible_region_essential_bound_cont} represent the achievable limits for each loss separately.
One of the important questions is how the smallest possible value of $\msuploss$ in the feasible region \eqref{eq:feasible_region} changes as $K$ and $C$ change.
In other words, we are interested in whether the optimal point $(\ncelossast, \msuplossast)$ (\mystar in \cref{figure:feasible_region}) is always achievable regardless of the values of $K$ and $C$.
To investigate it, we check whether the optimal point  (\mystar) crosses the surrogate gaps (\mybullet or \mysquare) under the following two conditions.

\begin{figure*}[t]
  \centering
  \begin{minipage}{0.45\textwidth}
    \centering
    \includegraphics[width=0.9\textwidth]{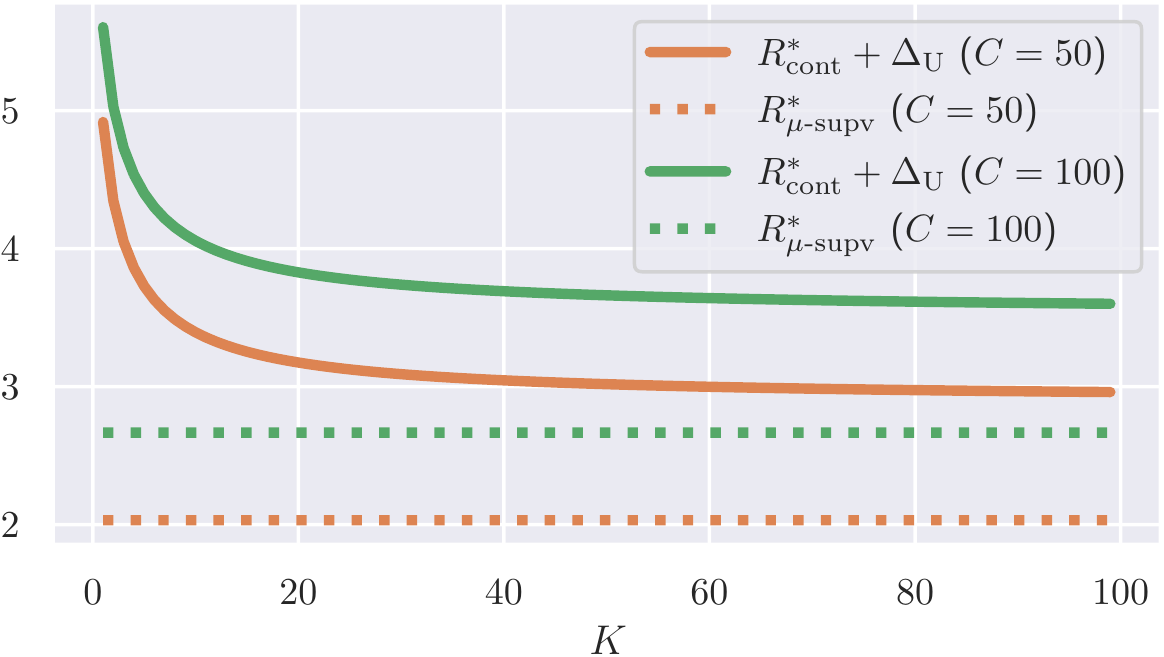}
    \subcaption{The feasible region at $\nceloss(\fbf) = \ncelossast$.}
    \label{figure:k_msuploss}
  \end{minipage}
  \begin{minipage}{0.45\textwidth}
    \centering
    \includegraphics[width=0.9\textwidth]{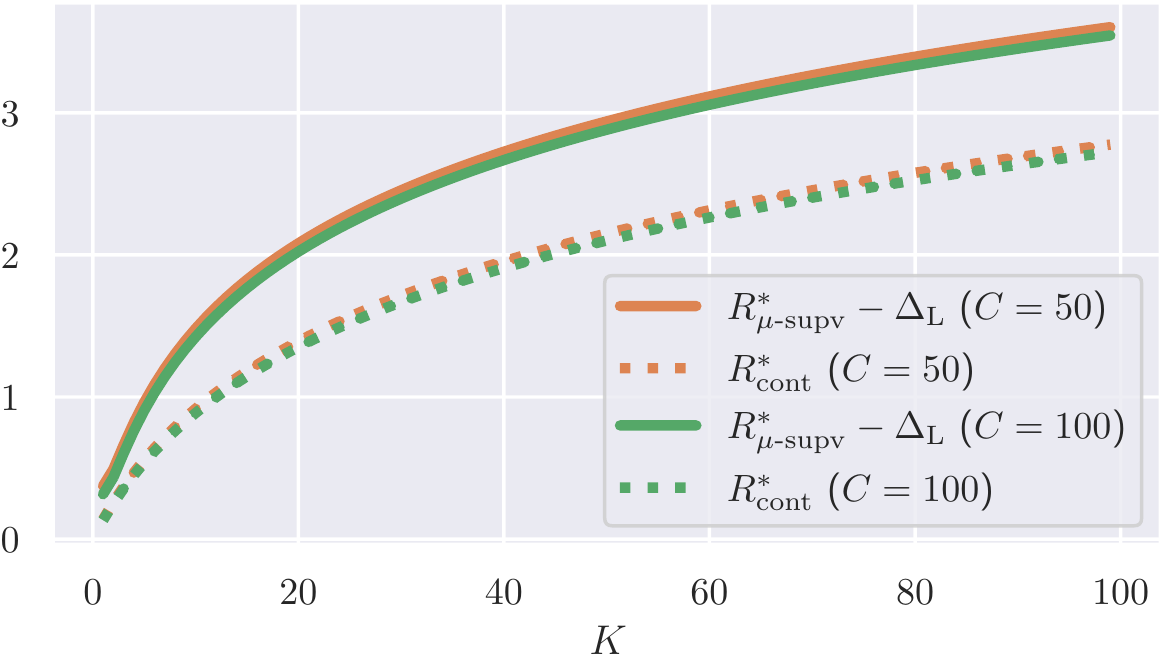}
    \subcaption{The feasible region at $\msuploss(\fbf) = \msuplossast$.}
    \label{figure:k_nceloss}
  \end{minipage}
  \caption{Visualization of the smallest possible value of $\msuploss$ in the feasible region \eqref{eq:feasible_region} for different $K$ and $C$.
  The dotted lines show the essential lower bounds which come from each loss separately.
  The solid lines show the surrogate upper bounds in which $\msuploss$ and $\nceloss$ restrict each other.}
\end{figure*}

\begin{itemize}
  \setlength{\leftskip}{-10pt}

  \item \underline{The feasible region at $\nceloss(\fbf) = \ncelossast$ (\cref{figure:k_msuploss}):}
  We plot the value $\ncelossast + \Delta_\mathrm{U}$ (solid line; the $\msuploss$-value of the point \mybullet in \cref{figure:feasible_region}) and the minimum possible $\msuploss$ ($\msuplossast$; dotted line) numerically.
  These two curves do not cross for all $K$, which means $\msuploss(\fbf) = \msuplossast$ is attainable no matter the values $K$ and $C$.
  In addition, the bound becomes sharper as $K$ increases, but the gap between the upper bound and $\msuplossast$ does remain even at the limit $K \nearrow \infty$.

  \item \underline{The feasible region at $\msuploss(\fbf) = \msuplossast$ (\cref{figure:k_nceloss}):}
  When $\msuploss(\fbf) = \msuplossast$, the contrastive loss $\nceloss(\fbf)$ is upper-bounded by $\msuplossast - \Delta_\mathrm{L}$ (the $\nceloss$-value of the point \mysquare in \cref{figure:feasible_region}).
  The curve of this value does not cross $\ncelossast$, which tells us that the lower bound does not exclude the optimal point $(\ncelossast, \msuplossast)$ from the feasible region \eqref{eq:feasible_region} at any $K$.
  Note that the gap between $\ncelossast$ and $\msuplossast - \Delta_\mathrm{L}$ gradually increases,
  meaning that it becomes much easier to attain $\msuplossast$ as $K$ increases.
\end{itemize}

Hence, the optimal point \mystar stays in the feasible region \eqref{eq:feasible_region} no matter the value $K$.
From this viewpoint, \emph{smaller $K$ is not necessarily disadvantageous because the optimal point \mystar remains in the feasible region.}
Note again that the estimation of $\msuploss$ may become harder with the smaller $K$ because of the gap $\Delta_\mathrm{U} - \Delta_\mathrm{L} = O(K^{-1})$, even if the optimal solution is unaffected by $K$.

\paragraph{Summary.}
We draw a connection between the mean supervised loss and the negative sample size $K$ by the following claim:
the gap between the contrastive loss and mean supervised loss shrinks with larger $K$ but the optimal mean supervised loss can nevertheless be achieved with small $K$.

\section{Comparison with Existing Work}
\label{section:comparison_with_existing}

\begin{table*}[t]
  \centering
  \caption{
    Surrogate bounds of the existing works.
    $H_n$ denotes the $n$-th harmonic number.
    Remark that \citeauthor{Arora2019ICML}'s and \citeauthor{Nozawa2021NeurIPS}'s bounds are valid only $K+1 \ge C$.
    The detailed derivations are discussed in \cref{appendix:existing_bounds}.
  }
  \label{table:existing_bounds}
  \begin{tabular}{lll}
    \toprule
    {} & \textsc{Upper Bound} & \textsc{Reference} \\
    \midrule
    $\msuploss(\fbf) \le$
    {} & $\frac{1}{(1 - \tau_K)v_{K+1}} \left\{ \nceloss(\fbf) - \E\ln(\col + 1) \right\}$ & \citet{Arora2019ICML} \\
    \addlinespace[5pt]
    {} & $\frac{1}{v_{K+1}} \left\{ 2\nceloss(\fbf) - \E\ln(\col + 1) \right\}$ & \citet{Nozawa2021NeurIPS} \\
    \addlinespace[5pt]
    {} & $\frac{2}{1 - \tau_K} \left\lceil\frac{2(C - 1)H_{C-1}}{K} \right\rceil \left\{ \nceloss(\fbf) - \E\ln(\col + 1) \right\}$ & \citet{Ash2022AISTATS} \\
    \bottomrule
  \end{tabular}
\end{table*}

This section first discusses the detailed difference between our main results and the existing theoretical results on CURL.
Then, we briefly review the other related literatures.

\paragraph{Surrogate bounds comparison.}
Here, we compare our results with the existing works by \citet{Arora2019ICML}, \citet{Nozawa2021NeurIPS}, and \citet{Ash2022AISTATS}.
We assume the uniform class prior for comparison.
We introduce a notation $\col \eqdef \sum_{k \in [K]} \indicator{c^+ = c^-_k}$.
Let $v_K$ be the probability that sampled $K$ negative classes contains all classes $c \in [C]$.
\begin{align}
  v_K \eqdef \sum_{n=1}^K \sum_{m=0}^{C-1} \binom{C-1}{m} (-1)^m \left(1 - \frac{m+1}{C}\right)^{n-1}\!\!.
\end{align}
The value $v_K$ is often referred to as the \emph{coupon collector's probability}.
Let $\tau_K$ be the probability that at least one of the negative classes $c^-_k$ is the same as the positive class $c^+$.
Under the uniform class prior, $\tau_K = 1 - (1 - \nicefrac{1}{C})^K$.
The surrogate bounds are summarized in \cref{table:existing_bounds}.%
\footnote{More precisely, \citet{Arora2019ICML} bound the averaged supervised loss over a part of the latent classes rather than $\msuploss$. Thus we can obtain a slightly better upper bound than \citeauthor{Arora2019ICML}'s bound shown in \cref{table:existing_bounds}. Nevertheless, the scale of the upper bound is dominated by the coefficient $(1-\tau_K)v_{K+1}$.}

\begin{figure}[t]
    \centering
    \begin{minipage}{0.4\textwidth}
      \includegraphics[width=\textwidth]{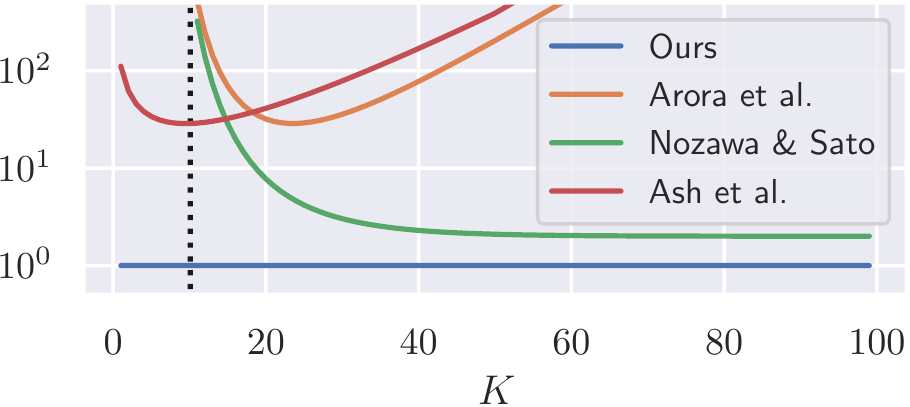}
      \subcaption{Coefficient of $\nceloss(\fbf)$.}
      \label{figure:upper_bound_coef}
    \end{minipage}
    \begin{minipage}{0.4\textwidth}
      \includegraphics[width=\textwidth]{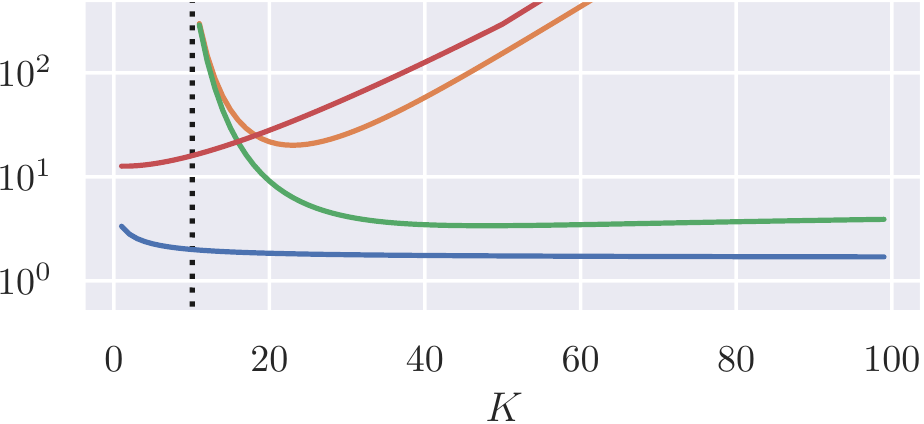}
      \subcaption{Upper bound at $\nceloss(\fbf) = \ncelossast$.}
      \label{figure:upper_bound_nceloss_min}
    \end{minipage}
    \caption{
      Theoretical comparison of surrogate bounds ($C = \num{10}$), log-scaled.
      \citeauthor{Arora2019ICML}'s and \citeauthor{Nozawa2021NeurIPS}'s bounds are valid only at $K + 1 \ge C$ (the dotted vertical lines).
    }
    \label{figure:upper_bound_comparison}
\end{figure}
We discuss the applicability and the dependence of existing and our bounds with respect to $K$.
As summarized in \cref{table:existing_bounds}, the coefficients of $\nceloss(\fbf)$ for existing bounds strongly depend on $C$ and $K$ through the \textit{coverage} ($v_K$) and the \textit{collision} ($\tau_K$) probability whereas our bounds have the constant coefficient.
As \citeauthor{Arora2019ICML}'s and \citeauthor{Nozawa2021NeurIPS}'s bounds depend on the \textit{coverage} probability $v_K$ in the denominator of the coefficients, the coefficients diverge in the range of $K+1<C$ (where the negative sample size is insufficient to cover the entire $[C]$).
In addition, the coefficients of the \citeauthor{Arora2019ICML}'s and \citeauthor{Ash2022AISTATS}'s bounds increase exponentially with increasing $K$ due to the \textit{collision} probability $\tau_K$ in the denominator, which are not consistent with the experimental facts.
Compared to these bounds, our upper bound has the coefficient independent of $C$ and $K$.

We numerically demonstrated the abovementioned dependencies on $K$ in \cref{figure:upper_bound_comparison}.
As we can see in \cref{figure:upper_bound_coef}, the coefficients of $\nceloss(\fbf)$ of \citeauthor{Arora2019ICML}'s and \citeauthor{Ash2022AISTATS}'s bounds have unique minima, \citeauthor{Nozawa2021NeurIPS}'s coefficient has monotonically decreasing nature, and our coefficient is constant.
On the other hand, the tendencies of the bound values at $\nceloss(\fbf) = \ncelossast$, namely, the best possible mean supervised loss in terms of the upper bounds (\cref{figure:upper_bound_nceloss_min}) are slightly different from the coefficient: \citeauthor{Ash2022AISTATS}'s bound is monotonically increasing, \citeauthor{Arora2019ICML}'s and \citeauthor{Nozawa2021NeurIPS}'s bounds have a unique minimum, and ours is monotonically decreasing.
Among the compared bounds, only ours is legitimate for all $K$ and moderately decreases with $K$, which agrees well with the experimental fact observed as well in \cref{figure:upper_bound_comparison_cifar10}; the details are stated in \cref{section:large_experiments}.%
\footnote{Note that \citet{Nozawa2021NeurIPS}'s bound also implies larger $K$ is better.
Still, our argument on how contrastive learning works differs from theirs.
See \cref{appendix:existing_bounds} for the further discussions.}
Such a moderate dependence on $K$ is due to the mechanism that the contrastive loss behaves as a surrogate objective.

\paragraph{Related literatures.}
\citet{Wang2020ICML} showed that the contrastive loss asymptotically favors data representation uniformly distributed over the unit sphere yet aligning across semantically similar samples.
\citet{Li2021NeurIPS} proposed an alternative loss function to the contrastive loss based on a kernel metric, following the similar idea to \citet{Wang2020ICML}.
\citet{Tosh2021ALT} showed that a (linear) mean classifier learned in CURL can approximate the (potentially nonlinear) Bayes classifier well.

While our work does not handle DA, several works analyzed the effect of DA on the performance.
\citet{Wen2021ICML} showed that DA is necessary to recover sparse signals under a specific assumption on the model architecture.
\citet{HaoChen2021NeurIPS} introduced a notion of the augmentation graph, representing how likely the nearby samples are generated via DA and showed that a type of contrastive loss could be viewed as a low-rank approximation of the adjacency matrix of the augmentation graph.
\citet{Kugelgen2021NeurIPS} proposed a loss function that enables the model to identify invariant factors across DA.

We mention a few works analyzing the other types of self-supervised learning; \citet{Garg2020NeurIPS} analyzed masked self-supervised learning,
\citet{Wei2021ICLR} analyzed the input consistency loss for unsupervised learning,
and \citet{Saunshi2021ICLR} analyzed auto-regressive language models.
\citet{Grill2020NeurIPS,Chen2021CVPR} proposed self-supervised learning without negative samples.

Lastly, multi-sample estimators~\citep{Oord2018arXiv,Poole2019ICML,Song2020ICLR} popularly used in mutual information estimation are substantially related to the contrastive loss.
We defer its discussion to \cref{appendix:mutual_information_estimation}.

\paragraph{Remark.}
A concurrent work~\citep{Wang2022ICLR} recently established the surrogate bound that has similar order in $K$ with ours without conditional independence assumption.
We stress that our results were obtained independently of theirs.
In addition, the purpose of our research is to clarify the mechanism of how $K$ affects the downstream performance, which is different from their motivation to discuss the validity of assumptions in contrastive learning.
In \cref{appendix:relaxation}, we discuss how our surrogate bounds hold without the conditional independence assumption.

\section{Experiments}
\label{section:experiments}

\begin{figure*}[htp]
  \begin{minipage}{\textwidth}
    \centering
    \includegraphics[width=\textwidth]{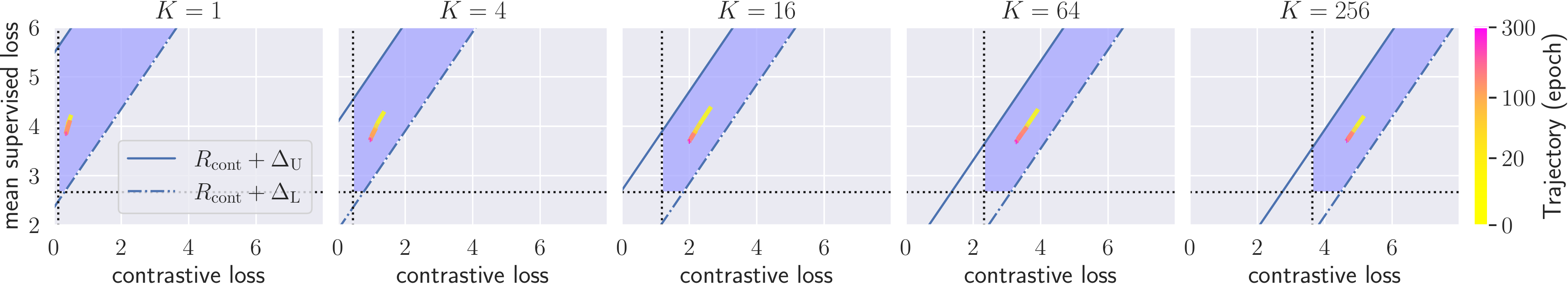}
    \caption{
      Learning trajectories of the \texttt{circle} dataset in the $(\nceloss,\msuploss)$-plot.
      The trajectories are plotted with gradient color lines, indicating the epochs.
    }
    \label{figure:trajectory}
  \end{minipage}
  \begin{minipage}{\textwidth}
    \centering
    \includegraphics[width=\textwidth]{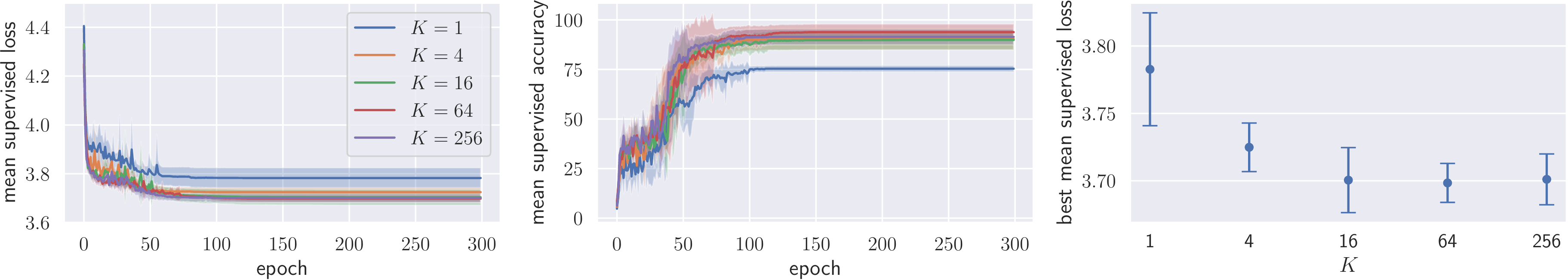}
    \caption{
      For each $K$, eight runs on the \texttt{circle} dataset are averaged with the standard deviations plotted.
      \textbf{(Left)} the test mean supervised losses at each epoch with the different negative sample sizes $K$.
      \textbf{(Middle)} the test mean supervised accuracy at each epoch with the different negative sample sizes $K$.
      \textbf{(Right)} the best test mean supervised loss with the different negative sample sizes $K$.
    }
    \label{figure:loss_and_curve}
  \end{minipage}
\end{figure*}

\begin{figure*}[t]
  \centering
  \begin{subfigure}[b]{0.30\textwidth}
    \includegraphics[width=\textwidth]{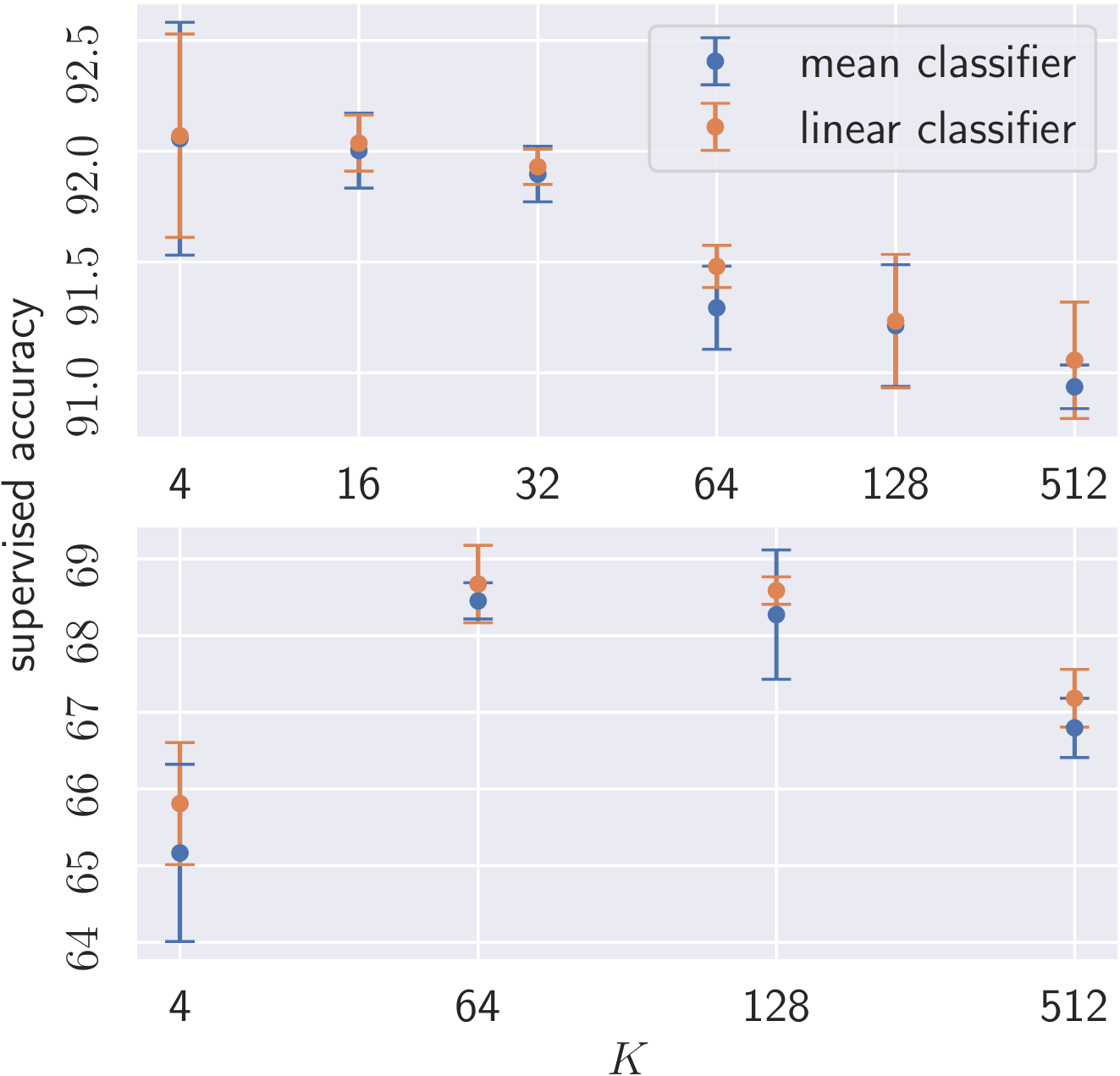}
    \caption{CIFAR-10 (\textbf{Top})/100 (\textbf{Bottom}).}
    \label{figure:cifar}
  \end{subfigure}
  \begin{subfigure}[b]{0.60\textwidth}
    \resizebox{\linewidth}{!}{%
      \includegraphics[width=\textwidth]{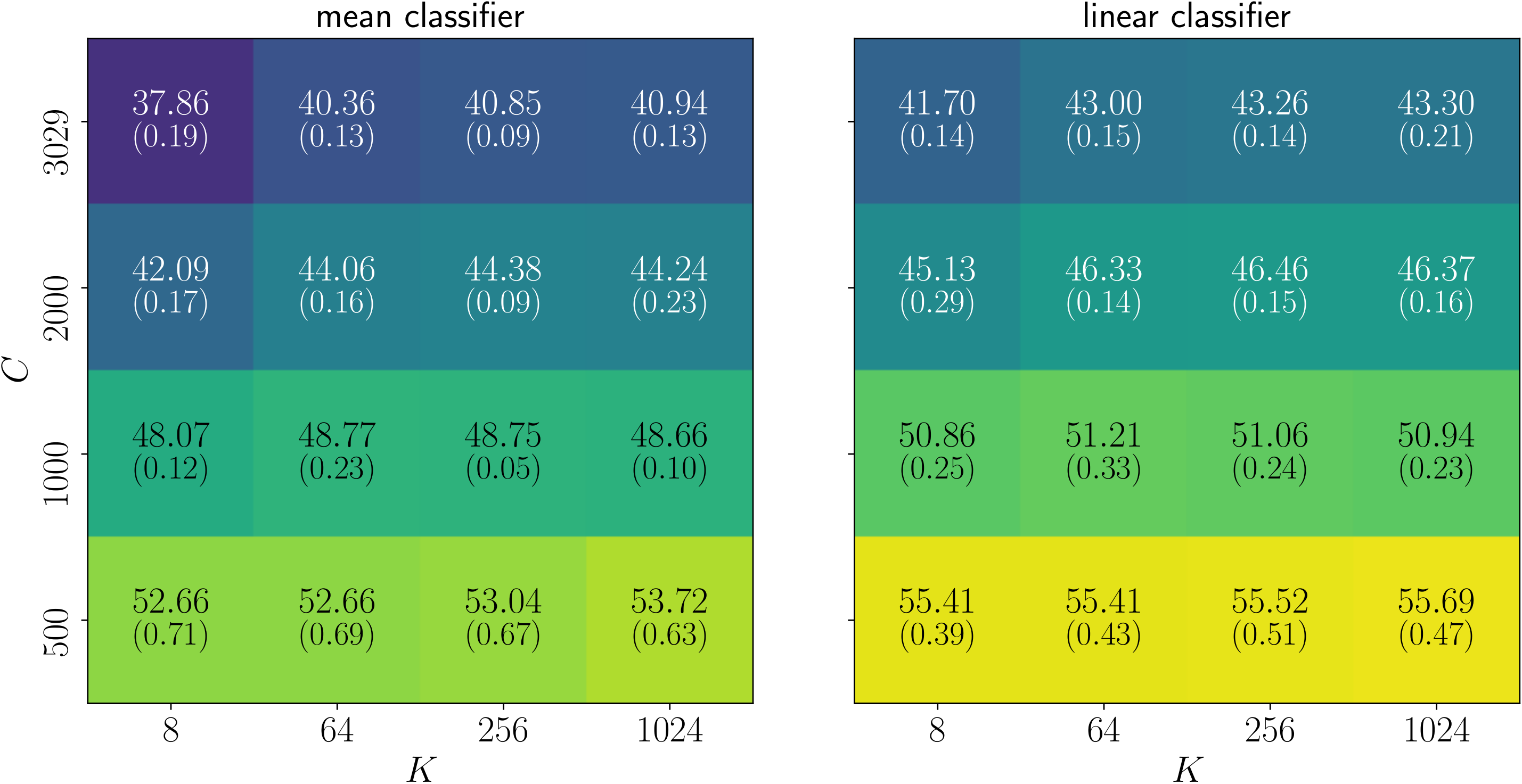}
    }
    \caption{Wiki-3029.}
    \label{figure:wiki-heatmap}
  \end{subfigure}
  \caption{
    Mean and linear classifier's test accuracy on CIFAR-10/100 and Wiki-3029 when varying the negative samples size $K$.
    For Wiki-3029, we also change the number of latent classes $C$.
    The error bars in (\textbf{a}) and parenthesized number in (\textbf{b}) indicate the standard deviation of three runs.}
  \label{figure:cifar-and-wiki}
\end{figure*}

\begin{figure*}[t]
  \centering
  \begin{subfigure}[b]{0.40\textwidth}
    \includegraphics[width=\textwidth]{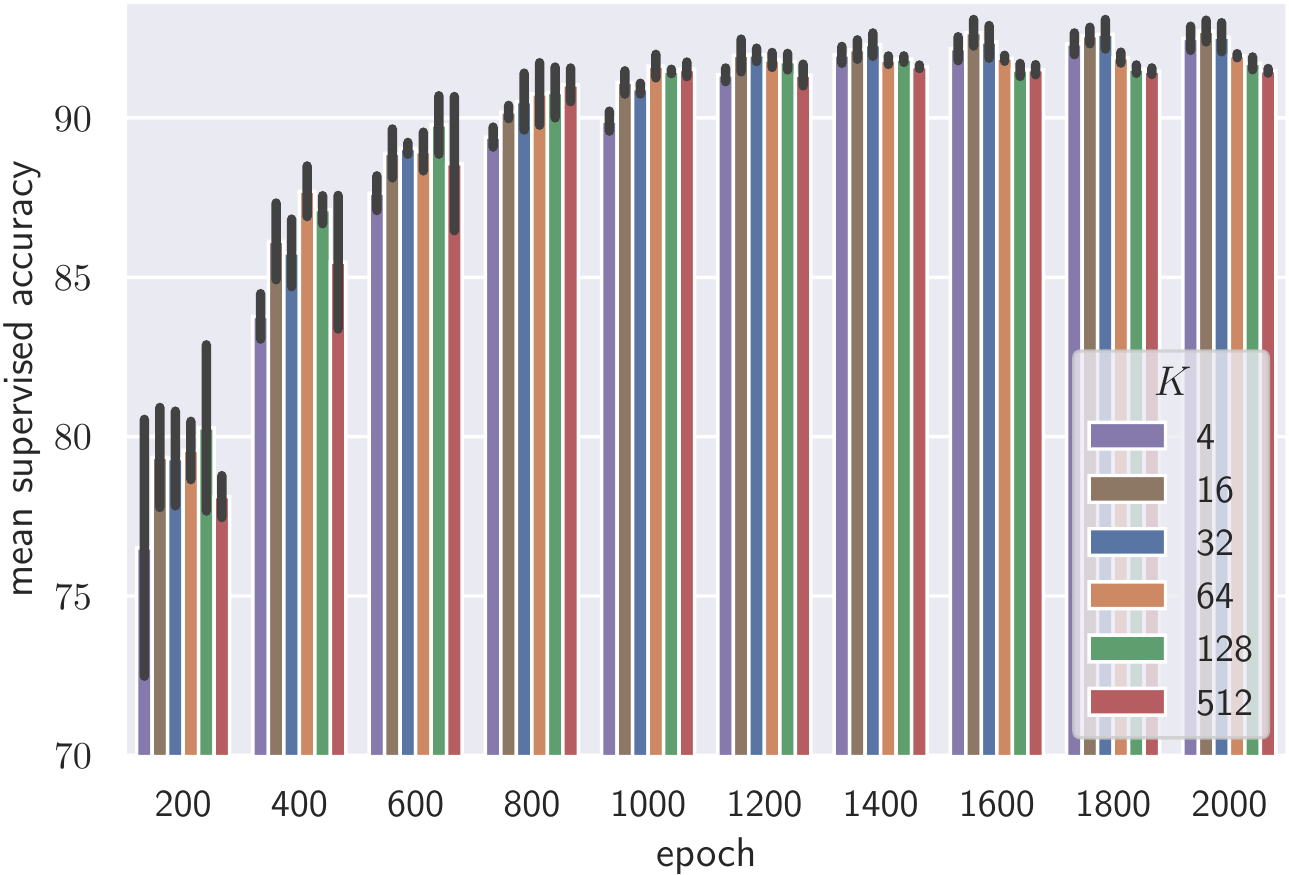}
    \caption{CIFAR-10.}
    \label{figure:cifar10-epoch}
  \end{subfigure}
  \begin{subfigure}[b]{0.40\textwidth}
    \includegraphics[width=\textwidth]{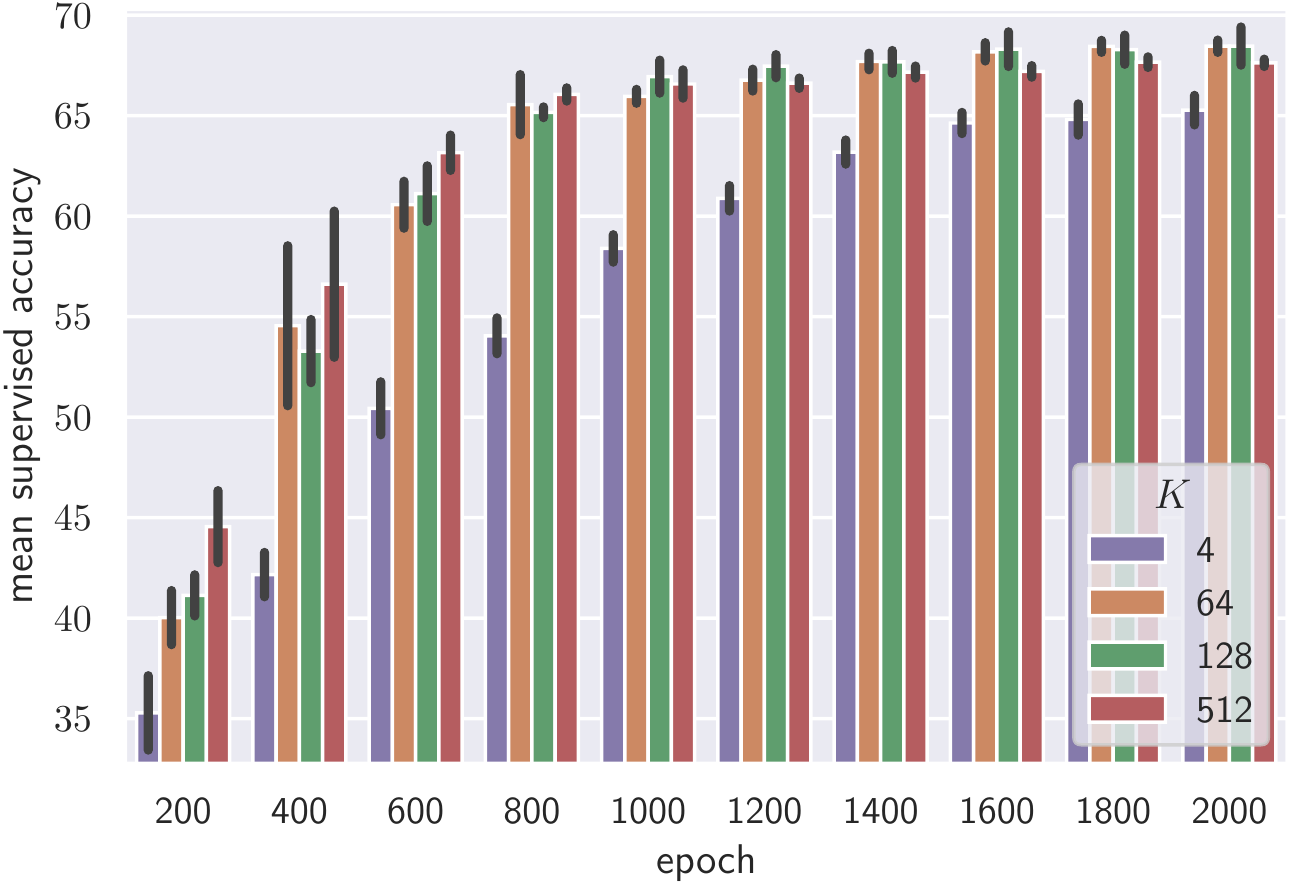}
    \caption{CIFAR-100.}
    \label{figure:cifar100-epoch}
  \end{subfigure}
  \caption{Test accuracy of mean classifier at every $\num{200}$ epochs on CIFAR-10/100.
  In CIFAR-100, the accuracy of $K=\num{4}$ and the others' have a large gap at smaller epochs at epoch $\num{200}$, but the gap become smaller when epochs increase.
  The error bars indicate the standard deviation of three runs.
  }
  \label{figure:cifar-epoch}
\end{figure*}

We verified our theoretical findings with experiments on synthetic (\cref{section:small_experiments}), vision, and language datasets (\cref{section:large_experiments}).
The details of the setup are in \cref{appendix:experimental_details}.
The experimental codes to reproduce all figures in the paper are available at \url{https://github.com/nzw0301/gap-contrastive-and-supervised-losses}.

\subsection{Small-scale Experiments on Synthetic Dataset}
\label{section:small_experiments}

\paragraph{Dataset and learning setups.}
We create a synthetic dataset \texttt{circle}, which is a 2D dataset created as follows:
for each class $c \in [C]$ ($C = 10$), $\num{1000}$ samples are drawn from $\mathsf{Uniform}([-\num{0.5} \; -\num{0.5}], [\num{0.5}, \num{0.5}])$, normalized, and multiplied by $\nicefrac{c + 1}{2}$.
The generated samples are nonlinear and require disentanglement to be linearly separable.
We treated $\num{60}\%$ of the generated samples as a training dataset and the rest of the samples as a test dataset.

As a feature extractor $\fbf$, we used a multi-layer perceptron (the number of units $\num{2}$-$\num{256}$-$\num{256}$-$\num{256}$) with the ReLU activation functions following after each hidden layer.
During the training, the extracted feature representations are normalized.
For negative samples, we sampled $K \in \{\num{1}, \num{4}, \num{16}, \num{64}, \num{256}\}$ samples without replacement from $2B-2$ points included in the same mini-batch to avoid the influence of mini-batch size $B$, inspired by~\citet{Ash2022AISTATS}.%
\footnote{
  Each mini-batch consists of $B$ pairs of positive pairs.
  The candidates of the negative samples are the $2B - 2$ samples excluding the anchor and its paired point.
}

\paragraph{Results.}
\cref{figure:trajectory} shows a single trajectory in the $(\nceloss,\msuploss)$-plot and the feasible region (confer \cref{figure:feasible_region}) for each $K$.
We plotted the trajectories by tracking $(\nceloss(\fbf^{(t)}), \msuploss(\fbf^{(t)}))$ at each epoch $t$ computed with the test dataset.
All trajectories were located in between the upper ($\nceloss + \Delta_\mathrm{U}$) and lower ($\nceloss + \Delta_\mathrm{L}$) bounds as a matter of course.
Given that the existing surrogate bounds provide the much larger upper bounds (\cref{figure:upper_bound_comparison}), our surrogate bounds provide the finest estimate of the mean supervised loss.
In addition, it is remarkable that all trajectories have nearly the same slopes as our surrogate bounds, which constitutes solid evidence that our surrogate bounds capture the learning dynamics well.

In \cref{figure:loss_and_curve}, the mean supervised loss and accuracy are compared with the different $K$.
We plotted the standard deviations of the same experiments with eight different random seeds for each $K$.
From these figures, it can be concluded that the contrastive loss performance becomes better with the larger $K$ in the sense that the supervised loss improved and the variance shrank.
The variance improvement is theoretically suggested by \cref{figure:trajectory} as well; the larger $K$ is, the smaller the gap between upper and lower bounds becomes.

\subsection{Large-scale Experiments on Vision and Language Datasets}
\label{section:large_experiments}

We used the same datasets as \citet{Arora2019ICML}: CIFAR-100~\citep{Krizhevsky2009techrep} and Wiki-3029~\citep{Arora2019ICML} datasets, along with CIFAR-10~\citep{Krizhevsky2009techrep} dataset.

\paragraph{Learning setups.}
We treated the supervised classes as latent classes as in \citet{Arora2019ICML} and \citet{Ash2022AISTATS} for creating positive pairs.
We used the original supervised classes of CIFAR-10/100 as $[C]$; $C=\num{10}$ and $C=\num{100}$, respectively.
We used $K$ in $\{ \num{4}, \num{16}, \num{32}, \num{64}, \num{128}, \num{512} \}$ and in $\{ \num{4}, \num{64}, \num{128}, \num{512} \}$ for CIFAR-10/100, respectively.
For Wiki-3029, we used $C \in \{\num{500}, \num{1000}, \num{2000}, \num{3029}\}$ and $K \in \{\num{8}, \num{64}, \num{256}, \num{1024}\}$.
For each different $(C, K)$, we trained the feature extractor $\fbf$ on the training dataset.
We then evaluated its performance on the test dataset with mean and linear classifiers.
We used ResNet-18~\citep{He2016CVPR}-based feature extractor $\fbf$ for CIFAR-10/100 and the fasttext~\citep{Joulin2017EACL}-based feature extractor for Wiki-3029.

\paragraph{Results.}
\Cref{figure:upper_bound_comparison_cifar10} shows the comparison between the estimated upper bounds using \cref{theorem:curl_upper_bound} and actual supervised loss on the CIFAR-10 test dataset.
We estimated the bounds by substituting the actual $\nceloss(\fbf)$ to the equations shown in \cref{theorem:curl_upper_bound} and \cref{table:existing_bounds}.
Our bound gave the closest bound to the experimental value of the supervised loss.
The existing surrogate bounds of \citet{Arora2019ICML} and \citet{Ash2022AISTATS} were prohibitively large to explain the classification performance.
Although \citeauthor{Nozawa2021NeurIPS}'s bound was comparable with ours, it was valid only in $K + 1 \ge C$ and tended to diverge near $K + 1 = C$, as shown in \cref{section:comparison_with_existing}.

We investigated how $K$ affects the test accuracy for different $C$ in \cref{figure:cifar-and-wiki}.
The test accuracy improved or was saturated with the larger $K$ for all $C$ on Wiki-3029.
In contrast, it was degraded as $K$ increased in mean and linear classifiers on CIFAR-10/100.
This behavior could be partly because of the gap between the cross-entropy loss and the supervised accuracy---the theory of CURL, including the existing studies, usually focuses on the cross-entropy loss only.
\cref{figure:cifar-closer-look} in \cref{appendix:experimental_details_figure4} revealed that the supervised loss was not significantly worse with the larger $K$ on CIFAR-10/100.

With the smaller $K$ and large $C$, we found that long epochs were more effective to improve classification accuracy than increasing the negative sample size $K$ (\cref{figure:cifar100-epoch}).
While similar results were reported by \citet[Figure 9]{Chen2020ICML},
it is important to remark that we randomly drew $K$ negative samples from the $2B - 2$ samples in the given mini-batch at each iteration as in \citet{Ash2022AISTATS}---a different approach was used by \citet{Chen2020ICML} to regard the all samples in the mini-batch except an anchor sample as negative samples.
Under our experimental setup, a learner may encounter less diverse samples with the smaller $K$ even if the mini-batch size $B$ is the same,
which could make the downstream performance worse---the longer epochs are necessary to mitigate the issue.
Since the CIFAR-10 dataset has a smaller $C$ and is simpler than the CIFAR-100, all accuracies were saturated with similar epochs for all $K$ (\cref{figure:cifar10-epoch}).

\section{Conclusion}
We established novel surrogate bounds for contrastive learning.
In contrast to existing theories, our bounds are applicable for all negative sample sizes and have a constant coefficient.
We verified that our bounds well explained learning dynamics on the synthetic dataset, and the surrogate gap shrinks with large negative samples.
For the vision and language datasets, the downstream classification losses were also best explained by our bounds in contrast to existing theories.
Our bounds provided a perspective on the effect of negative sample size that the contrastive loss behaves as a surrogate objective of the downstream loss, and its surrogate gap decays with larger negative samples.

\subsubsection*{Acknowledgments}
HB was supported by JSPS KAKENHI Grant Number 19J21094.
KN was supported by JSPS KAKENHI Grant Number 18J20470.
A part of experiments of this research was conducted using Wisteria/Aquarius in the Information Technology Center, The University of Tokyo.
We appreciate anonymous reviewers of ICLR 2022 and
ICML 2022 for giving constructive suggestions to improve our manuscript.

\bibliography{reference}
\bibliographystyle{icml2022}

\newpage
\appendix
\onecolumn

\begin{center}
  \fontsize{14.4pt}{20pt}
  \selectfont
  \rule[5pt]{\columnwidth}{1pt} \\
  \textsc{
    <<Appendix>> \\
    On the Surrogate Gap between Contrastive and Supervised Losses
  } \\
  \rule[0pt]{\columnwidth}{1pt}
\end{center}

As an additional notation, the $d$-dimensional ball of radius $r$ associated with the $\Lbb_p$-norm is denoted by $\Bbb_p^d(r) \eqdef \{ \xbf \in \Rbb^d \mid \|\xbf\|_p \le r \}$.
For $\zbf \in \Rbb^C$, the log-sum-exp function is denoted by $\LSE{\zbf} \eqdef \ln ( \sum_{c \in [C]} \exp(z_c) )$.

\section{Useful Lemmas}
\label{appendix:lemmas}

In this section, a few lemmas are introduced in order to prove the main results.

\begin{lemma}
  \label{lemma:log_sum_exp_bound}
  For $\zbf \in [-L^2, L^2]^N$,
  \begin{align}
    2\ln N \le \LSE{\zbf} + \LSE{-\zbf} \le 2\ln(N\cosh(L^2)).
  \end{align}
\end{lemma}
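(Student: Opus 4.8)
The plan is to reduce both inequalities to elementary estimates on the two quantities $S_+ \eqdef \sum_{i\in[N]} e^{z_i}$ and $S_- \eqdef \sum_{i\in[N]} e^{-z_i}$. The starting observation is simply $\LSE{\zbf} + \LSE{-\zbf} = \ln S_+ + \ln S_- = \ln(S_+ S_-)$, so the claim is equivalent to the two-sided bound $N^2 \le S_+ S_- \le N^2\cosh^2(L^2)$; after taking logarithms this is exactly the asserted inequality. Thus the whole proof amounts to controlling the product $S_+ S_-$ from both sides, which decouples cleanly into the lower and the upper bound.

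For the lower bound I would apply the Cauchy--Schwarz inequality to the vectors $(e^{z_i/2})_i$ and $(e^{-z_i/2})_i$: this gives $S_+ S_- = \bigl(\sum_i e^{z_i}\bigr)\bigl(\sum_i e^{-z_i}\bigr) \ge \bigl(\sum_i e^{z_i/2}e^{-z_i/2}\bigr)^2 = N^2$, hence $\ln(S_+ S_-) \ge 2\ln N$. (An equivalent route is to expand $S_+ S_- = \sum_{i,j} e^{z_i - z_j}$ and bound each off-diagonal pair by $e^{z_i-z_j} + e^{z_j-z_i} = 2\cosh(z_i-z_j) \ge 2$; either way the range constraint on $\zbf$ is not even needed here.)

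For the upper bound the key idea is to combine $S_+$ and $S_-$ \emph{before} estimating, rather than bounding each separately. By the AM--GM inequality,
\[
\sqrt{S_+ S_-} \le \tfrac{1}{2}(S_+ + S_-) = \sum_{i\in[N]} \tfrac{1}{2}\bigl(e^{z_i} + e^{-z_i}\bigr) = \sum_{i\in[N]} \cosh(z_i) \le N\cosh(L^2),
\]
where the last step uses that $\cosh$ is even and nondecreasing on $[0,\infty)$ together with $|z_i| \le L^2$; taking logarithms yields $\ln(S_+ S_-) \le 2\ln(N\cosh(L^2))$.

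The only delicate point is this last step. The naive estimate $S_+ \le N e^{L^2}$ and $S_- \le N e^{L^2}$ would only give the looser intercept $\ln(S_+S_-) \le 2\ln N + 2L^2$, which is not sharp enough for the downstream use of this lemma in \cref{theorem:curl_upper_bound,theorem:curl_lower_bound}; passing through the product $S_+S_-$ and AM--GM is precisely what converts the $e^{L^2}$ into $\cosh(L^2)$. Apart from recognizing that one must keep $S_+$ and $S_-$ coupled, everything is a one-line application of a standard inequality, so I do not expect any substantive obstacle.
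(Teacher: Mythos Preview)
Your proof is correct and takes a genuinely different route from the paper. The paper argues optimization-theoretically: for the lower bound it checks the first-order condition of the convex function $H(\zbf)=\LSE{\zbf}+\LSE{-\zbf}$ to conclude the minimum is at $\zbf=\zerobf$, and for the upper bound it invokes the vertex principle for maximizing a convex function over the cube $[-L^2,L^2]^N$, then reduces by symmetry to a one-parameter family $\tilde H(m)$ indexed by the number of coordinates equal to $L^2$ and locates the maximizing $m$. Your argument instead rewrites $H(\zbf)=\ln(S_+S_-)$ and handles the product directly with two classical inequalities: Cauchy--Schwarz for $S_+S_-\ge N^2$ and AM--GM plus monotonicity of $\cosh$ for $\sqrt{S_+S_-}\le N\cosh(L^2)$. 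Your approach is shorter, entirely elementary, and avoids any appeal to the structure of extrema of convex functions over polytopes; the paper's approach, on the other hand, makes the extremizers explicit (the all-zero vector for the minimum, and a half--half split of $\pm L^2$ for the maximum), which is informative but not needed for the lemma as stated. Your remark that the naive coordinatewise bound would only yield $2\ln N + 2L^2$ is also on point and explains why the coupling via AM--GM is the essential step.
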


\begin{proof}
  Define $H(\zbf) \eqdef \LSE{\zbf} + \LSE{-\zbf}$.
  First, we prove the lower bound of $H(\zbf)$.
  Since
  \begin{align}
    \pdiff{H}{z_i} = \frac{\exp(z_i)}{\sum_{n \in [N]} \exp(z_n)} - \frac{\exp(-z_i)}{\sum_{n \in [N]} \exp(-z_n)}
  \end{align}
  for all $i \in [N]$, $\zbf = \zerobf_N$ satisfies the first-order optimality condition of $H$.
  By noting that $H$ is convex due to the convexity of the log-sum-exp functions,
  $H$ is minimized at $\zbf = \zerobf_N$: $H(\zbf) \ge H(\zerobf_N) = 2\ln N$.
  Note that $\zbf$ can be any vector in $\Rbb^N$ for this lower bound.

  Next, we prove the upper bound of $H(\zbf)$.
  Observe that finding the maximum of $H(\zbf)$ in $\zbf \in \Bbb_\infty^N(L^2)$ is equivalent to a concave minimization problem over a convex polytope.
  It is known that every vertex of the polytope $\zbf \in \{-L^2, L^2\}^N$ is a local optimum for concave minimization over a convex polytope~\citep{Pardalos1986SIREV}.
  Hence, it is sufficient to test the vertices $\zbf \in \{-L^2, L^2\}^N$ to find the maximum of $H(\zbf)$.
  Define
  \begin{align}
    \tilde H(m) &\eqdef H\Bigl(\underbrace{L^2, L^2, \dots, L^2}_{m}, \underbrace{-L^2, -L^2, \dots, -L^2}_{N-m}\Bigr) \\
    &= \ln \Bigl\{ m\exp(L^2) + (N-m)\exp(-L^2) \Bigr\} + \ln \Bigl\{ m\exp(-L^2) + (N-m)\exp(L^2) \Bigr\}.
  \end{align}
  Note that the maximizer of $H(\zbf)$ in $\zbf \in \{-L^2, L^2\}^N$ is equivalent to that of $\tilde H(m)$ in $m \in [N]$
  because $H(\zbf)$ is symmetric in every $z_n$ for $n \in [N]$.
  We verify the following by simple algebra:
  \begin{align}
    \exp \tilde H(m)
    &= -\left(\exp(L^2) + \exp(-L^2)\right) \left\{ \left(m - \frac{N}{2}\right)^2 + \const \right\},
  \end{align}
  meaning that $\tilde H(m)$ is maximized at $m = \left\lfloor\frac{N}{2}\right\rfloor$.
  Hence, $H(\zbf) \le \tilde H(\lfloor\nicefrac{N}{2}\rfloor) \le \tilde H(\nicefrac{N}{2}) = 2\ln(N\cosh(L^2))$.
\end{proof}

\begin{lemma}
  \label{lemma:cross_entropy_offset}
  For all $z_0 \in \Rbb$ and $\zbf \in \Rbb^{K}$ such that $z_0, z_k \in [-L^2, L^2]$ ($\forall k \in [K]$),
  \begin{align}
    \ln & \frac{ \exp(-z_0) }{\exp(-z_0) + \sum_{k \in [K]} \exp(-z_k) } \nonumber \\
    & \ge -\ln\frac{\exp(z_0)}{ \exp(z_0) + \sum_{k \in [K]} \exp(z_k)} - 2\ln\left\{ (K+1) \cosh(L^2) \right\}.
  \end{align}
\end{lemma}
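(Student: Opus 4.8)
The plan is to observe that the two cross-entropy-type terms appearing on the two sides of the inequality, once their logarithms are split, combine into exactly the symmetric log-sum-exp quantity that \cref{lemma:log_sum_exp_bound} controls. Concretely, I would first stack the scalars into a single $(K+1)$-dimensional vector $\tilde\zbf \eqdef (z_0, z_1, \dots, z_K)$, which lies in $[-L^2, L^2]^{K+1}$ by hypothesis (note that this uses the boundedness assumption on $z_0$ as well, not just on $z_1,\dots,z_K$). By the definition of the log-sum-exp function, the left-hand side rewrites as
\begin{align*}
  \ln\frac{\exp(-z_0)}{\exp(-z_0) + \sum_{k \in [K]}\exp(-z_k)} = -z_0 - \LSE{-\tilde\zbf},
\end{align*}
and the first term on the right-hand side rewrites as
\begin{align*}
  -\ln\frac{\exp(z_0)}{\exp(z_0) + \sum_{k \in [K]}\exp(z_k)} = -z_0 + \LSE{\tilde\zbf}.
\end{align*}

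Next I would substitute these two identities into the claimed inequality. The two $-z_0$ terms cancel, and the statement reduces to
\begin{align*}
  \LSE{\tilde\zbf} + \LSE{-\tilde\zbf} \le 2\ln\{(K+1)\cosh(L^2)\},
\end{align*}
which is precisely the upper bound of \cref{lemma:log_sum_exp_bound} applied with $N = K+1$ and the vector $\tilde\zbf \in [-L^2,L^2]^{K+1}$. This completes the proof.

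I do not expect a real obstacle here: the mathematical content is entirely carried by \cref{lemma:log_sum_exp_bound}, and the only things to get right are the sign bookkeeping when splitting the logarithms of the two softmax-type ratios and the observation that the residual inequality is the symmetric log-sum-exp bound in dimension $K+1$ (not $K$), so that the constant comes out as $2\ln\{(K+1)\cosh(L^2)\}$.
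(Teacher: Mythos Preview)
Your proof is correct and follows essentially the same route as the paper: both rewrite the difference of the two softmax log-terms as $\LSE{\tilde\zbf}+\LSE{-\tilde\zbf}$ for $\tilde\zbf\in[-L^2,L^2]^{K+1}$ and bound this by $2\ln\{(K+1)\cosh(L^2)\}$. The only difference is cosmetic: the paper re-derives that vertex/convexity argument inline, whereas you (more economically) invoke \cref{lemma:log_sum_exp_bound} directly.
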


\begin{proof}
  We write $\tilde\zbf \eqdef [z_0 \; \zbf^\top]^\top$.
  Let $H(\tilde\zbf)$ be a function such that
  \begin{align}
    H(\tilde\zbf)
    &\eqdef -\ln\frac{\exp(z_0)}{\exp(z_0) + \sum_{k \in [K]} \exp(z_k) } - \ln\frac{ \exp(-z_0) }{ \exp(-z_0) + \sum_{k \in [K]} \exp(-z_k) } \\
    &= \ln\sum_{k=1}^{K+1} \exp(\tilde z_k)  + \ln\sum_{k=1}^{K+1} \exp(-\tilde z_k) .
  \end{align}
  Our goal is to find a tight upper bound of $H(\tilde\zbf)$ for $\tilde\zbf \in \Bbb_\infty^{K+1}(L^2)$.

  Observe that $H(\tilde\zbf)$ is the sum of the two log-sum-exp functions hence it is convex in $\tilde\zbf$.
  In addition, the domain $\Bbb_\infty^{K+1}(L^2)$ is a compact convex polytope.
  Henceforth, every vertex of the polytope, $\tilde\zbf \in \{-L^2, L^2\}^{K+1}$, is a local maximizer
  because maximizing $H(\tilde\zbf)$ is concave minimization over a convex polytope~\citep{Pardalos1986SIREV}.
  Since $H(\tilde\zbf)$ is symmetric in every element $\tilde z_k$,
  it is sufficient to test the vertices and see the difference between
  \begin{align}
    \underbrace{H\left( (\underbrace{L^2, \dots, L^2}_{\text{\# = $j$}}, -L^2, \dots, -L^2) \right)}_{\eqdef \tilde H(j)}
    \text{ and }
    \underbrace{H\left( (\underbrace{L^2, \dots, L^2}_{\text{\# = $j+1$}}, -L^2, \dots, -L^2) \right)}_{\eqdef \tilde H(j+1)}
  \end{align}
  for $j \in \{0, \dots, K\}$ to seek out the global maximum.
  For $0 \le j \le K$, a simple algebra shows
  \begin{align}
    \exp\left( \tilde H(j) \right) - \exp\left( \tilde H(j+1) \right)
    &= (K - 2j) \underbrace{\left\{ 2 - \left( \exp(2L^2) + \exp(-2L^2) \right) \right\}}_{\substack{\leq 0 \\ \text{because of AM-GM inequality}}},
  \end{align}
  from which we can tell that $\tilde H(j)$ is maximized at
  $j = K/2$ when $K$ is even and $j = (K+1)/2$ when $K$ is odd.
  In addition, it is confirmed that
  \begin{align}
    \exp\left( \tilde H\left(\frac{K}{2}\right) \right) - \exp\left( \tilde H\left(\frac{K+1}{2}\right) \right)
    &= \frac{2 - \left( \exp(2L^2) + \exp(-2L^2) \right)}{4} \\
    &\le 0,
  \end{align}
  where the AM-GM inequality is invoked at the last line.
  Eventually, $\tilde H\left((K+1)/2\right)$ turns out to be a tight upper bound of $H(\tilde\zbf)$ for $\tilde\zbf \in \Bbb_\infty^{K+1}(L^2)$.
  It is elementary to confirm $\tilde H\left((K+1)/2\right) = 2\ln\left\{ (K+1) \cosh(L^2) \right\}$.
\end{proof}

\section{Proofs of Main Results}
\label{appendix:proofs}

In this section, we provide proofs for the main results, \cref{theorem:curl_upper_bound,theorem:curl_lower_bound}.

\thmcurlupperbound*

\begin{proof}[Proof of \cref{theorem:curl_upper_bound}]
  The proof largely relies on the Jensen's inequality.
  To apply the Jensen's inequality in the \emph{reversed} way, we occasionally transform a convex function into a concave function by applying the log-sum-exp bound in \cref{lemma:log_sum_exp_bound}.
  \begin{align}
    \nceloss(\fbf)
    &= - \E_{c^+, \{c^-_k\}, \xbf, \xbf^+, \{\xbf_k^-\}} \ln\frac{\exp(\fbf(\xbf)^\top\fbf(\xbf^+))}{\exp(\fbf(\xbf)^\top\fbf(\xbf^+)) + \sum_{k \in [K]} \exp(\fbf(\xbf)^\top\fbf(\xbf^-_k))} \\
    &= - \E_{c^+, \xbf, \xbf^+} [\fbf(\xbf)^\top\fbf(\xbf^+)] + \E_{c^+, \{c^-_k\}, \xbf, \xbf^+, \{\xbf^-_k\}} \ln \Bigl(\exp(\fbf(\xbf)^\top\fbf(\xbf^+)) + \sum_{k \in [K]} \exp(\fbf(\xbf)^\top\fbf(\xbf^-_k))\Bigr) \\
    & \ge - \E_{c^+, \xbf, \xbf^+} [\fbf(\xbf)^\top\fbf(\xbf^+)] + \E_{c^+, \{c^-_k\}, \xbf, \xbf^+, \{\xbf^-_k\}} \underbrace{\ln \sum_{k \in [K]} \exp(\fbf(\xbf)^\top\fbf(\xbf^-_k))}_{= \LSE{\{\fbf(\xbf)^\top\fbf(\xbf^-_k)\}_{k \in [K]}}} \qquad \text{(monotonicity of $\ln$)} \\
    & \ge - \E_{c^+, \xbf, \xbf^+} [\fbf(\xbf)^\top\fbf(\xbf^+)] + \E_{c^+, \{c^-_k\}, \xbf} \underbrace{\ln \sum_{k \in [K]} \exp(\fbf(\xbf)^\top\mubf_{c^-_k})}_{= \LSE{\{\fbf(\xbf)^\top\mubf_{c^-_k}\}_{k \in [K]}}} \qquad \text{(Jensen's inequality)} \\
    &\ge - \E_{c^+, \xbf, \xbf^+} [\fbf(\xbf)^\top\fbf(\xbf^+)] - \E_{c^+, \{c^-_k\}, \xbf} \underbrace{\ln \sum_{k \in [K]} \exp(-\fbf(\xbf)^\top\mubf_{c^-_k})}_{= \LSE{\{-\fbf(\xbf)^\top\mubf_{c^-_k}\}_{k \in [K]}}} + 2\ln K \qquad \text{(\cref{lemma:log_sum_exp_bound})} \\
    &\ge - \E_{c^+, \xbf, \xbf^+} [\fbf(\xbf)^\top\fbf(\xbf^+)] - \E_{c^+, \xbf} \ln \sum_{k \in [K]} \E_{\{c^-_k\}} [\exp(-\fbf(\xbf)^\top\mubf_{c^-_k})] + 2\ln K \qquad \text{(Jensen's inequality)} \\
    &\stackrel{\text{(a)}}{\ge} - \E_{c^+, \xbf, \xbf^+} [\fbf(\xbf)^\top\fbf(\xbf^+)] - \E_{c^+, \xbf} \ln \Bigl( K \sum_{c \in [C]}\pi_{(1)} \exp(-\fbf(\xbf)^\top\mubf_{c}) \Bigr)  + 2\ln K \\
    &= - \E_{c^+, \xbf, \xbf^+} [\fbf(\xbf)^\top\fbf(\xbf^+)] - \E_{c^+, \xbf} \underbrace{\ln \sum_{c \in [C]} \exp(-\fbf(\xbf)^\top\mubf_{c})}_{= \LSE{\{-\fbf(\xbf)^\top\mubf_c\}_{c \in [C]}}} + 2\ln K - \ln K\pi_{(1)} \\
    &\ge - \E_{c^+, \xbf, \xbf^+} [\fbf(\xbf)^\top\fbf(\xbf^+)] + \E_{c^+, \xbf} \underbrace{\ln \sum_{c \in [C]} \exp(\fbf(\xbf)^\top\mubf_{c})}_{= \LSE{\{\fbf(\xbf)^\top\mubf_c\}_{c \in [C]}}} + 2\ln(C\cosh(L^2)) - 2\ln K - \ln K\pi_{(1)} \quad \text{(\cref{lemma:log_sum_exp_bound})} \\
    &\stackrel{\text{(*)}}{=} - \E_{c^+, \xbf} [\fbf(\xbf)^\top\mubf_{c^+}] + \E_{c^+, \xbf} \underbrace{\ln \sum_{c \in [C]} \exp(\fbf(\xbf)^\top\mubf_{c})}_{= \LSE{\{\fbf(\xbf)^\top\mubf_c\}_{c \in [C]}}} - 2\ln(C\cosh(L^2)) + 2\ln K - \ln K\pi_{(1)} \\
    &= \msuploss(\fbf) - 2\ln(C\cosh(L^2)) + 2\ln K - \ln K\pi_{(1)},
  \end{align}
  where we use $\E_c[A] \le \sum_{c \in [C]} \pi_{(1)} A$ and the monotonicity of $-\ln$ at (a).
  Note that the conditional independence is used only at (*).
\end{proof}

\thmcurllowerbound*

\begin{proof}[Proof of \cref{theorem:curl_lower_bound}]
  The proof is essentially a consequence of the Fenchel's inequality and the Jensen's inequality.
  First, by noting that the convex conjugate of the log-sum-exp function is the negative Shannon entropy,
  the following identity is obtained.
  \begin{align}
    \msuploss(\fbf)
    &= \E_{\xbf, y} \left[ -\fbf(\xbf)^\top\mubf_y + \LSE{\Wbf^\mu\fbf(\xbf)} \right] \\
    &= \E_{\xbf, y} \left[ -\fbf(\xbf)^\top\mubf_y + \sup_{\pbf \in \triangle^C} \left\{ \pbf^\top(\Wbf^\mu\fbf(\xbf)) + \entropy{\pbf} \right\} \right].
  \end{align}
  If we choose an arbitrary $\pbf \in \triangle^C$, $\msuploss(\fbf)$ is lower bounded (Fenchel's inequality).
  Our choice is $\pbf = \pibf$.
  Recall that $K$ is the number of negative samples.
  Then,
  \begin{align}
    \msuploss(\fbf)
    &\ge \E_{c^+, \xbf} \left[ -\fbf(\xbf)^\top\mubf_{c^+} + \sum_{c^- \in \Ycal} \pi_{c^-} \fbf(\xbf)^\top\mubf_{c^-} \right] + \entropy{\pibf} \label{eq:appendix:bound_2} \\
    &\stackrel{\text{(*)}}{=} \E_{c^+} \E_{\xbf, \xbf^+ \sim \Dcal^2_{c^+}} \left[ -\fbf(\xbf)^\top\fbf(\xbf^+) + \fbf(\xbf)^\top \bigg( \E_{c^-} \E_{\xbf^- \sim \Dcal_{c^-}} \left[ \fbf(\xbf^-) \right] \bigg) \right] + \entropy{\pibf} \\
    &= \E_{c^+} \E_{\xbf, \xbf^+} \left[ -\fbf(\xbf)^\top\fbf(\xbf^+) + \frac{1}{K} \sum_{k \in [K]} \E_{c^-_k} \E_{\xbf^-_k} [ \fbf(\xbf)^\top\fbf(\xbf^-_k) ] \right] + \entropy{\pibf} \\
    &= \E_{c^+, \{c^-_k\}_{k}} \E_{\xbf, \xbf^+, \{\xbf^-_k\}_{k}} \left[
      -\frac{1}{K} \sum_{k \in [K]} \left( \fbf(\xbf)^\top\fbf(\xbf^+) - \fbf(\xbf)^\top\fbf(\xbf^-_k) \right)
    \right] + \entropy{\pibf} \\
    &= \E_{c^+, \{c^-_k\}_{k}} \E_{\xbf, \xbf^+, \{\xbf^-_k\}_{k}} \left[
      -\frac{1}{K} \sum_{k \in [K]} \ln \exp(\fbf(\xbf)^\top(\fbf(\xbf^+) - \fbf(\xbf^-_k)))
    \right] + \entropy{\pibf}.
  \end{align}
  Note that the conditional independence is used at (*).
  Here, we can proceed with the Jensen's inequality to lower bound the first term:
  for a non-negative vector $\zbf \in \Rbb_{\geq 0}^N$, the inequality $-N^{-1}\sum_{i \in [N]} \ln z_i \ge -\ln (N^{-1}\sum_{i \in [N]} z_i)$ holds.
  If we set $z_k = \exp(\fbf(\xbf)^\top(\fbf(\xbf^+) - \fbf(\xbf^-_k)))$ for $k \in [K]$,
  \begin{align}
    & \msuploss(\fbf) - \entropy{\pibf} \nonumber \\
    &\ge \E_{\substack{c^+, \{c^-_k\}_k, \\ \xbf, \xbf^+, \{\xbf^-_k\}_k}} \left[
      -\ln \frac{\sum_{k \in [K]} \exp(\fbf(\xbf)^\top(\fbf(\xbf^+) - \fbf(\xbf^-_k)))}{K}
    \right] \\
    &\ge \E_{\substack{c^+, \{c^-_k\}_k, \\ \xbf, \xbf^+, \{\xbf^-_k\}_k}} \left[
      -\ln \frac{ \exp(\fbf(\xbf)^\top(\fbf(\xbf^+) - \fbf(\xbf^+))) + \sum_{k \in [K]} \exp(\fbf(\xbf)^\top(\fbf(\xbf^+) - \fbf(\xbf^-_k)))}{K}
    \right] \\
    &= \E_{\substack{c^+, \{c^-_k\}_k, \\ \xbf, \xbf^+, \{\xbf^-_k\}_k}} \left[
      \ln \frac{ \exp(-\fbf(\xbf)^\top\fbf(\xbf^+)) }{ \exp(-\fbf(\xbf)^\top\fbf(\xbf^+)) + \sum_{k \in [K]} \exp(-\fbf(\xbf)^\top\fbf(\xbf^-_k)) }
    \right] + \ln K.
  \end{align}
  Finally, by using \cref{lemma:cross_entropy_offset},
  \begin{align}
    & \msuploss(\fbf) - \entropy{\pibf} \nonumber \\
    &\ge \E_{c^+, \{c^-_k\}_k} \E_{\xbf, \xbf^+, \{\xbf^-_k\}_k} \left[ -\ln\frac{ \exp(\fbf(\xbf)^\top\fbf(\xbf^+)) }{ \exp(\fbf(\xbf)^\top\fbf(\xbf^+)) + \sum_{k \in [K]} \exp(\fbf(\xbf)^\top\fbf(\xbf^-_k)) } \right] + \ln K  \nonumber \\
    & \quad - 2 \ln \left\{ (K+1) \cosh(L^2) \right\} \\
    &= \nceloss(\fbf) + \ln K - 2 \ln(K + 1) - 2 \ln\cosh(L^2),
  \end{align}
  which concludes the proof.
\end{proof}

\section{Discussion on Relaxing Assumptions}
\label{appendix:relaxation}

In this section, we discuss relaxation of our main results (\cref{theorem:curl_upper_bound,theorem:curl_lower_bound}) from the following perspectives: the conditional independence assumption $\xbf \indep \xbf^+ \mid c^+$, incorporating DA, and the correspondence between the supervised and latent classes $\Ycal = [C]$.
Note that each relaxation is conceptually orthogonal and can be combined together.

\subsection{When Conditionally Independent Assumption Is Violated}
\cref{theorem:curl_upper_bound,theorem:curl_lower_bound} initially rely on the conditional independence, which is used only when we mutually transform the following terms:
\begin{align}
  \E_{c^+, \xbf, \xbf^+} [\fbf(\xbf)^\top\fbf(\xbf^+)] = \E_{c^+, \xbf} [\fbf(\xbf)^\top\mubf_{c^+}].
\end{align}
This operation appears only once in each proof of \cref{theorem:curl_upper_bound,theorem:curl_lower_bound} at (*).
The conditional independence can be removed by the following bound:
\begin{align}
  -2L^2
  \le \E_{c^+, \xbf, \xbf^+} [\fbf(\xbf)^\top\fbf(\xbf^+)] - \E_{c^+, \xbf} [\fbf(\xbf)^\top\mubf_{c^+}]
  = \E_{c^+, \xbf} [\fbf(\xbf)^\top(\fbf(\xbf^+) - \mubf_{c^+})]
  \le 2L^2,
\end{align}
where the inequalities is due to the Cauchy-Schwarz inequality: $|\fbf(\xbf)^\top(\fbf(\xbf^+) - \mubf_{c^+})| \le \|\fbf(\xbf)\| \cdot \|\fbf(\xbf^+) - \mubf_{c^+}\| \le 2L^2$.
An excessive term incurs in the upper/lower bounds by invoking this bound.
\citet{Wang2022ICLR} used the same idea to remove the conditional independence assumption.
Nevertheless, we show \cref{theorem:curl_upper_bound,theorem:curl_lower_bound} with the conditional assumption to focus on the influence of the negative sample size $K$ on the surrogate gap.

\subsection{When Supervised Class Differs from Latent Class}
In unsupervised/self-supervised representation learning, it is often natural not to suppose any relationship between the supervised class set used in a downstream task and the latent classes.
For example, unsupervised data in hand may represent concepts such as `dog', `cat', `taxi', `bus', and `bird', while one expects to classify `animal' or `vehicle' in downstream.
In our main results presented so far, we suppose that the supervised class set $\Ycal$ is the same as the latent classes $[C]$.
Hereafter, we consider several cases where $\Ycal = [C]$ does not hold.

\underline{(Case I) $\Ycal$ is a subset of $[C]$:}
The upper bound (\cref{theorem:curl_upper_bound}) can be extended but the lower bound (\cref{theorem:curl_lower_bound}) cannot for this case.
In the proof of \cref{theorem:curl_upper_bound} (\cref{appendix:proofs}), we showed
\begin{align}
  \nceloss(\fbf) \ge \underbrace{-\E_{c^+, \xbf \sim \Dcal_{c^+}} \ln\frac{\exp(\fbf(\xbf)^\top\mubf_{c^+})}{\sum_{c \in [C]} \exp(\fbf(\xbf)^\top\mubf_{c})}}_\text{cross-entropy loss defined over class set $[C]$} - 2\ln(C\cosh(L^2)) + 2\ln K - \ln K\pi_{(1)}.
\end{align}
The cross-entropy loss defined over latent classes $[C]$ can be lower-bounded as follows:
\begin{align}
  \E_{c^+ \sim \Pbb(c^+ = y \in [C]), \xbf \sim \Dcal_{c^+}} \left[ -\ln\frac{\exp(\fbf(\xbf)^\top\mubf_{c^+})}{\sum_{c \in [C]} \exp(\fbf(\xbf)^\top\mubf_{c})} \right]
  &\ge \E_{c^+ \sim \Pbb(c^+ = y \in \Ycal), \xbf \sim \Dcal_{c^+}} \left[ -\ln\frac{\exp(\fbf(\xbf)^\top\mubf_{c^+})}{\sum_{c \in [C]} \exp(\fbf(\xbf)^\top\mubf_{c})} \right] \\
  &= - \E_{c^+, \xbf}[\fbf(\xbf)^\top\mubf_{c^+}] + \E_{c^+, \xbf} \LSE{\{\fbf(\xbf)^\top\mubf_{c}\}_{c \in [C]}} \\
  &\ge - \E_{c^+, \xbf}[\fbf(\xbf)^\top\mubf_{c^+}] + \E_{c^+, \xbf} \LSE{\{\fbf(\xbf)^\top\mubf_{c}\}_{c \in \Ycal}} \\
  &= \E_{c^+ \sim \Pbb(c^+ = y \in \Ycal), \xbf \sim \Dcal_{c^+}} \left[ -\ln\frac{\exp(\fbf(\xbf)^\top\mubf_{c^+})}{\sum_{c \in \Ycal} \exp(\fbf(\xbf)^\top\mubf_{c})} \right],
\end{align}
where the first inequality is resulted from the non-negativity of the cross-entropy loss and the second inequality uses monotonicity of the log-sum-exp: $\LSE{z_1, \dots, z_N} \ge \LSE{z_1, \dots, z_{N-1}}$.
Hence, the mean supervised loss defined over the supervised class set $\Ycal$ is bounded by the contrastive loss.

\underline{(Case II) $\Ycal$ is a coarse-grained set of $[C]$:}
In this case, we consider $\Ycal$ such as
\begin{align}
  \Ycal = \left\{
    (c_1, \dots, c_{j_1}), (c_{j_1 + 1}, \dots, c_{j_2}), \dots, (c_{j_{J-1} + 1}, \dots, c_{j_J})
    \;\middle|\;
    \begin{aligned}
      & \text{$c_j \in [C]$ for any $j = 1, 2, \dots, c_{j_J}$} \\
      & \text{and $c_{j} \ne c_{j'}$ for any $j, j' = 1, 2, \dots, c_{j_J}$ }
    \end{aligned}
  \right\},
\end{align}
where $j_J = C$.
Intuitively speaking, we split the latent classes $[C]$ into disjoint tuples $(c_j)_{j \in [j_1]}, \dots, (c_j)_{j \in [j_J - j_{J-1}]}$ and regard each tuple in the disjoint set as a coarse-grained class of the original latent classes.
This case aligns with the initial example: `dog', `cat', and `bird' in the latent classes are combined into a single class `animal', while `taxi' and `bus' are combined into `vehicle'.
Both the upper (\cref{theorem:curl_upper_bound}) and lower (\cref{theorem:curl_lower_bound}) bounds can be extended for this case.
We omit the discussion on the upper bound because this is an immediate result by noting the linearity of the expectation and summation over classes.
For the lower bound, after \cref{eq:appendix:bound_2} in the proof (\cref{appendix:proofs}), we need to replace the expectation and summation over the supervised class $\Ycal$ with those over the latent class $[C]$, which is immediate as is the case of the upper bound.

\section{Essential Bounds of Mean Supervised and Contrastive Losses}
\label{appendix:essential_bounds_of_losses}

This section provides a supplementary explanation of the essential lower bounds of the mean supervised and contrastive losses.
The common approaches of CURL applies the normalization on representation, in order to employ the cosine similarity $\frac{\fbf(\xbf)^\top\fbf(\xbf')}{\|\fbf(\xbf)\|_2 \cdot \|\fbf(\xbf')\|_2}$ as the similarity metric.
Then, it is reasonable to assume $\|\fbf(\xbf)\|_2 \le L$ for all $\xbf$ with our data representation $\fbf$.
The normalized representation corresponds to the case $L = 1$.

When we introduce the constraint $\|\fbf(\xbf)\|_2 \le L$, the mean supervised loss and contrastive loss are restricted as well.
As for the mean supervised loss,
\begin{align}
  \msuploss(\fbf)
  &\ge \inf_{\|\fbf'\|_2 \le L} \msuploss(\fbf') \\
  &= \inf_{\|\fbf'\|_2 \le L} \E \left[ \ln\left(1 + \sum_{c \ne y} \exp(\fbf'(\xbf)^\top(\mubf_c - \mubf_y))\right) \right] \\
  &= \ln\left(1 + (C - 1)\exp(-2L^2)\right) \\
  & (\eqdef \msuplossast).
\end{align}
As for the contrastive loss,
\begin{align}
  \nceloss(\fbf)
  &\ge \inf_{\|\fbf'\|_2 \le L} \nceloss(\fbf') \\
  &= \inf_{\|\fbf'\|_2 \le L} \E_{c^+, \{c^-_k\}, \xbf} \E_{\xbf^+, \{\xbf^-_k\}} \left[ \ln\left(1 + \sum_{k \in [K]} \exp(\fbf'(\xbf)^\top(\fbf'(\xbf^-_k) - \fbf'(\xbf^+)))\right) \right] \\
  &\ge \inf_{\|\fbf'\|_2 \le L} \E_{c^+, \{c^-_k\}, \xbf} \left[ \ln\left(1 + \sum_{k \in [K]} \exp(\fbf'(\xbf)^\top(\mubf_{c^-_k} - \mubf_{c^+}))\right) \right] \\
  &= \sum_{m=0}^K \binom{K}{m} \left(\frac{1}{C}\right)^m \left(1 - \frac{1}{C}\right)^{K-m} \ln\{1 + m + (K - m)\exp(-2L^2)\} \\
  & (\eqdef \ncelossast),
\end{align}
where the Jensen's inequality is applied in the second inequality.

\section{Discussion of Existing Surrogate Bounds}
\label{appendix:existing_bounds}

In this section, we describe the existing surrogate bounds in details to make them comparable with our main results.
Then, we further discuss the detailed comparison between our theory and existing works.
Before the discussion, we need to introduce the \emph{sub-class} loss (of the mean classifier), which is the supervised classification loss over a subset of classes:
\begin{align}
  \subloss(\fbf, T) \eqdef \E_{\xbf, y} \left[ -\ln \frac{\exp(\mubf_y^\top\fbf(\xbf))}{\sum_{c \in T} \exp(\mubf_c^\top\fbf(\xbf))} \right],
\end{align}
where $T \subseteq [C]$ is a subset of classes and $y$ is drawn from the subset of $\pibf$ with respect to $T$.

\paragraph{\citeauthor{Arora2019ICML}'s bound.}
We introduce additional notation that \citet{Arora2019ICML} use.
For a subset of classes $T$,
\begin{itemize}
  \item $Q \subseteq [C]$ is the set of distinct classes in $c^+, c^-_1, \dots, c^-_K$
  \item $I^+ \eqdef \{k \in [K] \mid c^-_k = c^+ \}$
  \item $\col \eqdef \sum_{k \in [K]} \indicator{c^+ = c^-_k} = |I^+|$
  \item $\rho_\mathrm{max}(T) \eqdef \max_{c \in T} \pi_c$
  \item $\rho_\mathrm{min}^+(T) \eqdef \min_{c \in T} \Pbb_{c^+, \{c^-_k\}_k \sim \pibf^{K+1}} (c^+ = c \mid Q = T, I^+ = \emptyset)$
  \item $\tau_K \eqdef \Pbb(I^+ \ne \emptyset)$
\end{itemize}
\citet{Arora2019ICML} prove a finite-sample surrogate bound in Theorem~B.1.
In its proof, Eq.~(26) is a surrogate bound established for a fixed $\fbf$.
For the comparison, we focus on their Eq.~(26):
\begin{align}
  (1 - \tau_K) \E_{T \sim \pibf^{K+1}} & \left[ \frac{\rho_\mathrm{min}^+(T)}{\rho_\mathrm{max}(T)} \subloss(\fbf, T) \right] \nonumber \\
  &\le \nceloss(\fbf) - \tau_K \E_{c^+, \{c^-_k\}_k \sim \pibf^{K+1}} \left[ \ln(\col + 1) \middle| I^+ \ne \emptyset \right].
\end{align}
We split the expectation term in the left-hand side as follows.
\begin{align}
  \E & \left[ \frac{\rho_\mathrm{min}^+(T)}{\rho_\mathrm{max}(T)} \subloss(\fbf, T) \right] \nonumber \\
  &= \underbrace{\Pbb(\text{$T$ covers $[C]$})}_{= v_{K+1}} \cdot \E \left[ \frac{\rho_\mathrm{min}^+(T)}{\rho_\mathrm{max}(T)} \subloss(\fbf, T) \middle| \text{$T$ covers $[C]$} \right] \nonumber \\
    & \qquad + \Pbb(\text{$T$ does not cover $[C]$}) \cdot \E \left[ \frac{\rho_\mathrm{min}^+(T)}{\rho_\mathrm{max}(T)} \subloss(\fbf, T) \middle| \text{$T$ does not cover $[C]$} \right] \\
  &\ge v_{K+1} \frac{\rho_\mathrm{min}^+([C])}{\rho_\mathrm{max}([C])} \underbrace{\subloss(\fbf, [C])}_{= \msuploss(\fbf)}.
\end{align}
Under the uniform class prior assumption ($\pibf = \nicefrac{1}{C} \cdot \onebf$), $\rho_\mathrm{max}([C]) = \nicefrac{1}{C}$,
and we can pick any class $c_0 \in [C]$ by the symmetry and $\rho_\mathrm{min}^+([C]) = \Pbb(c^+ = c_0 \mid Q = [C], I^+ = \emptyset) = \nicefrac{1}{C}$.
In addition,
\begin{align}
  \tau_K \E_{c^+, \{c^-_k\}_k \sim \pibf^{K+1}} \left[ \ln(\col + 1) \middle| I^+ \ne \emptyset \right]
  &= \E \left[ \ln(\col + 1) \right] - (1 - \tau_K) \E \left[ \ln(\col + 1) \middle| I^+ = \emptyset \right] \\
  &= \E \left[ \ln(\col + 1) \right].
\end{align}
As a result, we obtain the following simplified expression in \cref{table:existing_bounds}:
\begin{align}
  \msuploss(\fbf) \le \frac{1}{(1 - \tau_K)v_{K+1}} \left\{ \nceloss(\fbf) - \E[\ln(\col + 1)] \right\}.
\end{align}

\paragraph{\citeauthor{Nozawa2021NeurIPS}'s bound.}
The surrogate bound provided by \citet[Theorem~8]{Nozawa2021NeurIPS} involves a factor resulting from DA and self-supervised learning setting.
By dropping this (negative) factor, the surrogate bound is
\begin{align}
  \nceloss(\fbf)
  &\ge \frac{1}{2} \left\{ v_{K+1} \msuploss(\fbf) + (1-v_{K+1}) \E_{T \sim \pibf^{K+1}}[\subloss(\fbf, T)] + \E[\ln(\col + 1)] \right\} \\
  &\ge \frac{1}{2} \left\{ v_{K+1} \msuploss(\fbf) + \E[\ln(\col + 1)] \right\},
\end{align}
resulting in the bound in \cref{table:existing_bounds}.
The sub-class loss may be safely dropped because it has the coefficient $1 - v_{K+1}$, which is expected to be exponentially small in $K$.

\paragraph{\citeauthor{Ash2022AISTATS}'s bound.}
\citet[Theorem~5]{Ash2022AISTATS} provides the following surrogate bound
\begin{align}
  \msuploss(\fbf) \le \frac{2 \left\lceil \frac{2(1 - \pi_{(-1)})H_{C-1}}{K\pi_{(-1)}} \right\rceil}{(1 - \pi_{(1)})^K} \left\{ \nceloss(\fbf) - \tau_K \E_{c^+, \{c^-_k\}_k \sim \pibf^{K+1}} \left[ \ln(\col + 1) \middle| I^+ \ne \emptyset \right] \right\}.
\end{align}
By substituting $\pibf = \nicefrac{1}{C} \cdot \onebf$ and $\tau_K \E[\ln(\col + 1) \mid I^+ \ne \emptyset] = \E[\ln(\col + 1)]$, the bound in \cref{table:existing_bounds} is obtained.

\paragraph{Detailed comparisons.}
As we stated in \cref{section:comparison_with_existing} of the main text, only our bound agrees well with the experimental fact that the larger $K$ is better for \textit{all} $K$ regions:
\begin{itemize}
  \item \citet{Arora2019ICML}: Large $K$ degrades the performance because of the label collision.
  \item \citet{Nozawa2021NeurIPS}: Large $K$ improves the performance for $K > C$.
  \item \citet{Ash2022AISTATS}: The optimal $K$ exists by the collision-coverage trade-off.
\end{itemize}
Even though the claim by \citet{Nozawa2021NeurIPS} is similar to ours, we discovered a different underlying mechanism to support this idea, which leads to better explainability of empirical facts.

The proof of \citet{Nozawa2021NeurIPS} is based on the idea of label coverage: The more negative samples we draw (larger $K$), the more likely the negative samples can cover all class labels.
The upper bound based on this idea is only activated when $K>C$ because label coverage is impossible with $K \le C$.
This inability contradicts the real experiments including \citet{Chen2021NeurIPS,Tomasev2022arXiv}, which showed that CURL exhibits reasonable performance even with small $K$.

Our proof leverages the idea that $\nceloss$ and $\msuploss$ have the similar log-sum-exp functional forms.
This similarity casts $\nceloss$ as a surrogate objective of $\msuploss$ and its surrogate gap is reduced with larger $K$.
Even with small $K$, the upper bound of $\msuploss$ is loose but not prohibitively large thereby the surrogate bound of $\msuploss$ is still valid.
Our theoretical claim reveals that the surrogate gap improves in $O(K^{-1})$ for \text{all} $K$ regions, which is in good agreement with the real experiments.
Eventually, our theory provides practical feedback such that one may reduce $K$ (even smaller than $C$) to trade off the downstream performance with the computational cost.

\section{Relationship to Mutual Information (MI) Estimation}
\label{appendix:mutual_information_estimation}
The contrastive loss $\nceloss$ we studied in this paper is also known as the InfoNCE loss \citep{Oord2018arXiv}, which is known to be deeply related to the multi-sample estimation of mutual information (MI) \citep{Oord2018arXiv,Poole2019ICML,Song2020ICLR}.
Although the multi-sample estimators have high bias and low variance compared to variational estimators in general~\citep{Poole2019ICML,Song2020ICLR,Guo2021arXiv}, the quantitative analysis of the bias-variance trade-off of the multi-sample MI estimators has yet to be clearly known.
\citet{Tian2020NeurIPS} and \citet{Tschannen2020ICLR} experimentally showed that maximizing \textit{tighter} MI bound does not necessarily lead to good representation; there is no guarantee that the model can achieve higher MI by the tighter bound.

Recently, the theoretical limitations of sample-based MI estimation have been analyzed \citep{Gao2015AISTATS,McAllester2020AISTATS}.
These studies revealed that a particular type of sample-based estimator of MI \citep{Gao2015AISTATS} or its lower bound \citep{McAllester2020AISTATS} can be upper bounded by $O(\ln N)$ for the number of samples $N$.
In this section, we discuss the implications of these limitations in the CURL setting.

Given two random variables $X$ and $Y$, suppose that we have $K + 1$ randomly drawn pairs $\{(x_i, y_i)\}_{i=1}^{K+1}$ from these random variables such that for all $(i, j)$, $(x_i, y_j)$ can be regarded as a positive pair when $i = j$, and otherwise can be regarded as a negative pair.
\citet[Equation 10]{Poole2019ICML} derived the following lower bound for MI:
\begin{equation}
  I(X; Y) \ge I_{\mathrm{NCE}}^{K+1} \coloneqq \mathbb{E} \left[ \frac{1}{K+1} \sum_{i=1}^{K+1} \ln \frac{\exp(s(x_i, y_i))}{ \frac{1}{K+1} \sum_{j=1}^{K+1} \exp(s(x_i, y_j)) } \right],
\end{equation}
where $I(X; Y)$ is the MI between $X$ and $Y$, and $s(x, y)$ is a critic function.
This lower bound estimator can be rewritten using $\nceloss$ as follows:
\begin{align}
  I_{\mathrm{NCE}}^{K+1} &\coloneqq \mathbb{E} \left[ \frac{1}{K+1} \sum_{i=1}^{K+1} \ln \frac{ \exp(s(x_i, y_i))}{ \frac{1}{K+1} \sum_{j=1}^{K+1} \exp(s(x_i, y_j)) } \right] \\
  &= \mathbb{E} \left[ \frac{1}{K+1} \sum_{i=1}^{K+1} \ln \frac{\exp ({s(x_i, y_i)})}{ \sum_{j=1}^{K+1} \exp ({s(x_i, y_j)}) } \right] + \ln(K+1) \\
  &= \mathbb{E} \left[ \frac{1}{K+1} \sum_{i=1}^{K+1} \ln \frac{\exp ({s(x_i, y_i)})}{ \exp ({s(x_i, y_i)}) + \sum_{j \neq i} \exp(s(x_i, y_j)) } \right] + \ln(K+1) \\
  &= \mathbb{E} \left[ \ln \frac{ \exp(s(x, x^+))}{  \exp(s(x, x^+)) + \sum_{j = 1}^K \exp(s(x, x^-_j)) } \right] + \ln(K+1) \\
  &= - \nceloss(\fbf) + \ln (K + 1).
\end{align}
The first equality is obtained by putting the constant in the denominator outside.
The third equality comes by replacing the notation $(x_i, y_i)$ with $(x, x^+)$ under the assumption that all $(x_i, y_i)$ come from the same iid distribution.
By setting $s(x, y) \coloneqq \fbf(x)^\top \fbf(y)$, we obtain the last equation.

Here, \citet{McAllester2020AISTATS} gave the following theorem for the sample-based estimator of the lower bound on MI.
\begin{theorem}[\citet{McAllester2020AISTATS} Theorem 1.1, informal]
  Let $\widehat{I}^N$ be any mapping from $N$ samples of $(X, Y)$ to $\mathbb{R}$ that satisfies
  \begin{equation}
    I(X; Y) \ge \widehat{I}^N(\{(x_i, y_i)\}_{i=1}^N)
  \end{equation}
  in high probability, then the following relationship holds in high probability:
  \begin{equation}
    \widehat{I}^N(\{(x_i, y_i)\}_{i=1}^N) \le 2 \ln N + 5.
  \end{equation}
\end{theorem}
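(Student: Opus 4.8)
The plan is to read the hypothesis as a constraint that must hold \emph{simultaneously for every} distribution $P$ of $(X,Y)$: for each such $P$ one has $\Pr_{S\sim P^N}[\widehat{I}^N(S) > I_P(X;Y)] \le \delta$, where $S$ denotes the length-$N$ i.i.d.\ sample. I then want to argue that $N$ samples cannot distinguish an arbitrary $P$ (possibly with enormous mutual information) from a carefully chosen companion distribution whose mutual information is at most $2\ln N$; the lower-bound guarantee applied to the companion then forces $\widehat{I}^N(S)$ to be small for $P$ too. Concretely, I would first draw a \emph{ghost sample} $R = \{(a_j,b_j)\}_{j=1}^{M}$ of size $M \coloneqq N^2$ i.i.d.\ from $P$ and let $Q_R$ be its empirical distribution. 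Since $Q_R$ is supported on at most $M$ atoms, $I_{Q_R}(X;Y) \le \ln M = 2\ln N$, so applying the hypothesis to the fixed distribution $Q_R$ gives $\Pr_{S'\sim Q_R^{N}}[\widehat{I}^N(S') > 2\ln N + 5] \le \Pr_{S'\sim Q_R^{N}}[\widehat{I}^N(S') > \ln M] \le \delta$ for every $R$, hence $\Pr_{R,\,S'}[\widehat{I}^N(S') > 2\ln N + 5] \le \delta$ after averaging over $R \sim P^{M}$.

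The crux is a coupling showing that $S' \sim Q_R^{N}$ is, on a constant-probability event, distributed exactly as $S \sim P^N$. A fresh draw $S' \sim Q_R^{N}$ is $N$ samples taken with replacement from the $M = N^2$ atoms of $Q_R$; by the birthday estimate the event $E$ that these $N$ draws are pairwise distinct has probability $\prod_{k=0}^{N-1}(1 - k/M) \ge 1 - \tfrac{N-1}{2N} \ge \tfrac12$. Conditioned on $E$, the sample $S'$ is a uniformly random ordered $N$-subtuple of the $M$ i.i.d.\ points of $R$, and by exchangeability of an i.i.d.\ sequence this subtuple has precisely the law of $N$ i.i.d.\ draws from $P$, i.e.\ the law of $S$. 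Writing $t \coloneqq 2\ln N + 5$, this yields
\[
  \delta \;\ge\; \Pr_{R,\,S'}\bigl[\widehat{I}^N(S') > t,\; E\bigr]
  \;=\; \Pr[E]\cdot\Pr_{S\sim P^N}\bigl[\widehat{I}^N(S) > t\bigr]
  \;\ge\; \tfrac12\,\Pr_{S\sim P^N}\bigl[\widehat{I}^N(S) > t\bigr],
\]
so $\Pr_{S\sim P^N}[\widehat{I}^N(S) > 2\ln N + 5] \le 2\delta$, which is the advertised high-probability bound.

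The main obstacle, and the only step needing genuine care, is the design of the ghost distribution $Q_R$: it must have \emph{provably} small mutual information \emph{and} be indistinguishable from $P$ under an $N$-sample observation, and these requirements compete---a small support caps $I_{Q_R}$ but creates repeated atoms that betray the construction, while a support much larger than $N^2$ removes repeats but pushes $I_{Q_R}$ past $2\ln N$. The balance point at support size $\Theta(N^2)$ is exactly what produces the $2\ln N$ rate (rather than $\ln N$). The remaining ingredients---the occupancy/birthday bound and the exchangeability identity---are routine; the one technical nuisance is a distribution $P$ with atoms, where ``all distinct'' is replaced by a high-probability ``few collisions'' event and $I_{Q_R}$ is bounded by the logarithm of the number of distinct atoms of $R$, at most $2\ln N$, with the additive constant $5$ absorbing this slack together with the expectation-to-probability conversion.
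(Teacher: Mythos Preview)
The paper does not supply a proof of this statement; it is quoted (and explicitly marked ``informal'') from \citet{McAllester2020AISTATS} and invoked only as a black box in \cref{appendix:mutual_information_estimation}. There is therefore no in-paper proof to compare your argument against.

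That said, your sketch is sound. The ghost-sample construction---draw $R\sim P^{M}$ with $M=N^{2}$, let $Q_R$ be its empirical law so that $I_{Q_R}(X;Y)\le\ln M=2\ln N$ deterministically, apply the distribution-free lower-bound hypothesis at $Q_R$, then couple $S'\sim Q_R^{N}$ to $S\sim P^{N}$ via the birthday event on indices and exchangeability of $R$---is exactly the mechanism that produces the $2\ln N$ rate, and your probability accounting ($\Pr[E]\ge\tfrac12$, hence $\delta\mapsto 2\delta$) is correct. Two small remarks: first, the argument as written already delivers the threshold $2\ln N$, so the $+5$ in the informal statement is unused slack rather than something your proof needs; second, the atomic case requires no separate handling once you run the coupling at the level of \emph{indices} in $[M]$ rather than values (as you implicitly do), since sampling from the empirical measure $Q_R$ is identical to drawing a uniform index and reporting $R$ at that index regardless of ties in $R$. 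Your opening observation that the hypothesis must be read as a \emph{distribution-free} guarantee---valid simultaneously for every $P$---is the linchpin; without it the step that applies the hypothesis to the random surrogate $Q_R$ would be illegitimate.
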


Since $I_{\mathrm{NCE}}^{K+1}$ satisfies the condition for $\widehat{I}^{K+1}$, we now have the following:
\begin{equation}
  - \nceloss(\fbf) + \ln (K + 1) \le 2 \ln (K + 1) + 5 \implies \nceloss(\fbf) \ge - \ln(K + 1) - 5.
\end{equation}
However, the right-hand statement always holds by the construction of $\nceloss$ for all $\fbf$ ($\forall \fbf, \nceloss(\fbf) \ge 0$).
In other words, in the case of the CURL setting, \citet{McAllester2020AISTATS}'s theorem does not restrict $\nceloss$, which means that the large $K$ effect investigated in our paper comes from a completely different mechanism from the above theorem.
While the theoretical studies on MI aim to guarantee for multi-sample estimation of ground-truth MI, a series of CURL studies, including ours, differ in that we aim to derive a surrogate gap bound between two different losses, namely supervised loss and contrastive loss.
Specifically, the above derivation does not address supervised loss in that both the left-hand ($I(X; Y)$ or $I_{\mathrm{NCE}}^{K+1}$) and the right-hand ($\nceloss(\fbf)$) quantities represent the amount of information between different views rather than MI between the view and its label.
The existing studies on sample-based MI estimation are worthwhile in the sense that these works revealed the $O(\ln N)$ effect on the non-trivial estimators such as $k$-NN based estimator \citep{Gao2015AISTATS} or \textit{any} kind of lower bound estimator \citep{McAllester2020AISTATS}.

\section{Experimental details}
\label{appendix:experimental_details}

\subsection{Synthetic Dataset}
We used Adam~\citep{Kingma2015ICLR} optimizer with the weight decay of coefficient $\num{0.01}$ to all parameters.
The mini-batch size was set to $B = \num{1024}$ and the number of epochs was $\num{300}$.
The learning rate was set to $\num{0.01}$ with \texttt{ReduceLROnPlateau} scheduler (patience: $\num{10}$ epochs) provided by PyTorch~\citep{Paszke2019NeurIPS}.

\subsection{CIFAR-10/100}
We treated $\num{10}\%$ training samples as a validation dataset by sampling class uniformly.
We used the original test dataset for testing.
We used the same data-augmentation as in the CIFAR-10 experiment by~\citet{Chen2020ICML} during contrastive learning and linear supervised training of the linear classifier.

As a feature extractor $\fbf$, we modified the ResNet-18~\citep{He2016CVPR} by following the convention of self-supervised representation learning~\citep[B.9]{Chen2020ICML};
replacement of the first convolutional layer with a smaller one, removal of the first max-pooling layer, and replacement of the final fully-connected layer with a nonlinear projection head whose dimensional is $\num{32}$.%
\footnote{
  Unlike the reported results by \citet{Chen2021NeurIPS}, smaller dimensionality, i.e., $\num{32}$ gives better downstream accuracy on CIFAR-100 than $\num{64}$ or $\num{128}$.
  This difference might come from the differences in the loss function and positive pair's generation process.
}

Since we need to enlarge the negative samples size $K$ that depends on the size of mini-batches,
we followed a large mini-batch training setting used in recent self-supervised learning~\citep{Chen2020ICML,Caron2020NeurIPS}.
We used LARC~\citep{You2017techrep} optimizer wrapping the momentum SGD, whose momentum term was $\num{0.9}$.
We applied weights decay of coefficient $\num{e-4}$ to all parameters except for all bias terms and batch norm's parameters.
The base learning rate was initialized at $\text{lr} \times \sqrt{B}$, where $\text{lr} \in \{\num{2}, \num{4}, \num{6}\} \times \nicefrac{1}{64}$ and mini-batch size $B=\num{1024}$ inspired by SimCLR's squared learning rate scaling.
As a learning rate scheduler for each iteration, we used linear warmup during the first $\num{10}$ epochs and cosine annealing without restart~\citep{Loshchilov2017ICLR} during the rest epochs.
The number of epochs was $\num{2000}$.

We implemented our experimental code by using PyTorch \citep{Paszke2019NeurIPS}'s distributed data-parallel training~\citep{Li2018VLDB} on $\num{8}$ NVIDIA A100 GPUs provided by the internal cluster.
Therefore we replaced the all batch normalization layer with \texttt{SyncBatchNorm} module provided by PyTorch.\footnote{See \citet[Sec. 6.2]{Wu2021arXiv} for more detailed discussion of this replacement for contrastive learning.}
To accelerate contrastive learning, we used automatic mixed-precision training provided by PyTorch.

\subsection{Wiki-3029}
Wiki-3029 contains $\num{3029}$ English Wikipedia article pages.
Each page consists of $\num{200}$ sentences.
Since the dataset does not have the explicit train/validation/test splits, we split the dataset into $\num{70}\%/\num{10}\%/\num{20}\%$ train/validation/test datasets, respectively.
As a pre-processing, we tokenized the dataset using torchtext's \texttt{basic\_english} tokenizer.
After tokenization, we removed the tokens whose frequency is less than $\num{5}$ in the training dataset.
We did not use DA.

We used fasttext~\citep{Joulin2017EACL}'s based feature extractor.\footnote{\citet{Arora2019ICML} uses GRU-based feature encoder with frozen word embeddings of GloVe~\citep{Pennington2014EMNLP} trained on commonCrawl.}
In our preliminary experiments, only using a word embedding layer and average pooling among words perform better than either additional linear or nonlinear projection heads.
A similar model to ours is also used in~\citet{Ash2022AISTATS}.
The dimensionality of the word embedding layer was $\num{256}$.

We mainly followed the same optimization setting as our CIFAR-10/100 experiments.
We note that the mini-batch size $B=\num{2048}$; the initial learning rate $\text{lr}$ was selected in $\{\num{1}, \num{2}, \num{3}, \num{4}\} \times \nicefrac{1}{40}$; no weights decay; the number of epochs was $\num{90}$; and perform linear warmup during the first $\num{3}$ epochs.
When we decrease $C$, the number of epochs is multiplied by $\nicefrac{\num{3000}}{C}$ for simplicity.\footnote{We found the contrastive learning did not yield good feature representations for a downstream task without this longer training.}

\subsection{Contrastive Learning}
\label{sec:contrastive-learning-details}
By following the data generation process in contrastive representation learning and existing work~\citep{Arora2019ICML,Ash2022AISTATS},
we treated the supervised classes $\Ycal$ as latent classes $[C]$.
After obtaining training/validation/test datasets as described above,
we carefully constructed positive pairs for contrastive learning \textit{before} training%
\footnote{
  We can create the labeled dataset, especially with non-overlapped latent classes,
  if we draw positive samples at each iteration or epoch during optimization using stochastic gradient descent.
}
as follows;
We treated each sample in the training data as an anchor sample.
We drew a different sample from the same latent class of each anchor sample as a positive sample in the training dataset.
For negative samples, we drew $K$ negative samples from other samples in the same mini-batch by following the convention of self-supervised representation learning such as SimCLR~\citep{Chen2020ICML}.
Since \citet{Chen2020ICML} used all other samples as negative samples, the negative samples size and the size of mini-batches depend on each other: $K=2B-2$.
To relax the effect of the difference of the mini-batch size when we change $K$,
we drew $K$ samples without replacement from $2B-2$ inspired by~\citet{Ash2022AISTATS}.
In this sampling, we guaranteed to draw at most one sample from each positive pair because we are concerned about the relation between the number of latent classes and $K$.
We did not use validation and test datasets during contrastive representation learning.

\subsection{Mean and Linear Classifiers' Evaluation}
\label{sec:mean-linear-eval-details}
For evaluation, we reported the test accuracy values of mean and linear classifiers.
For a linear classifier, we used Nesterov's momentum SGD, whose momentum coefficient was $\num{0.9}$ without weight decay.
We set the mini-batch size $B=\num{256}$ and $B=\num{512}$ for CIFAR-10/100 and Wiki-3029, respectively.
We used cosine annealing without restart as a learning rate scheduler for each iteration.  %
We set $\num{100}$ and $\num{30}$ epochs for CIFAR-10/100 and Wiki-3029 datasets, respectively.
For CIFAR-10/100, we set learning rate as $\num{0.03}$.
For Wiki-3029, we searched the learning rate in $\{\num{0.5}, \num{1}, \num{5}, \num{10}, \num{50}\} \times \nicefrac{1}{10^3}$.
The learning rate was scaled by using squared learning rate scaling.
For linear evaluation of CIFAR-10/100, we used PyTorch's distributed data-parallel training.
We calculated the test accuracy by using the best combination of the contrastive model and the hyper-parameter of a linear classifier that maximizes the validation accuracy.
We repeated contrastive learning and downstream task's evaluation three times with different random seeds and reported the averaged values.

\subsection{Details of \texorpdfstring{\cref{figure:upper_bound_comparison_cifar10}}{Figure~\ref{figure:upper_bound_comparison_cifar10}}}
\label{appendix:experimental_details_figure4}
Before computing the upper bounds and supervised loss, we normalized feature representations $\fbf(\xbf)$ learned in \cref{sec:contrastive-learning-details} to ensure $L=1$, which is the upper bound of $\| \fbf(\xbf) \|_2, \forall \xbf$.
For each random seed and the number of negative samples $K$, we selected learned feature encoder $\fbf$ that got the highest validation mean supervised accuracy in different learning rates of the optimizer of the contrastive learning.
Then we calculated the test supervised loss value by using the selected contrastive models.

Using the same feature encoder with $L_2$ normalization, we calculated the contrastive loss on the test dataset.
To do so, we created positive pairs by the same procedure on the test dataset as described in~\cref{sec:contrastive-learning-details}.
Negative samples were also drawn from the other samples in the mini-batches as the contrastive learning step described in \cref{sec:contrastive-learning-details}.
To calculate the contrastive loss, we used the same batch size as the contrastive learning step and only one epoch.
Since this contrastive loss calculation was stochastic due to the sampling of positive and negative samples, we repeated the contrastive loss calculation $\num{25}$ times and averaged them to create plot~\cref{figure:upper_bound_comparison_cifar10}.
Note that we used the theoretical values of $\tau_K, v_{K+1}, \E\ln(\col + 1)$ that are shown in the existing upper bounds on \cref{table:existing_bounds} rather than the simulated values.

\begin{figure}[t]
  \centering
  \begin{subfigure}[b]{0.4\textwidth}
    \includegraphics[width=\textwidth]{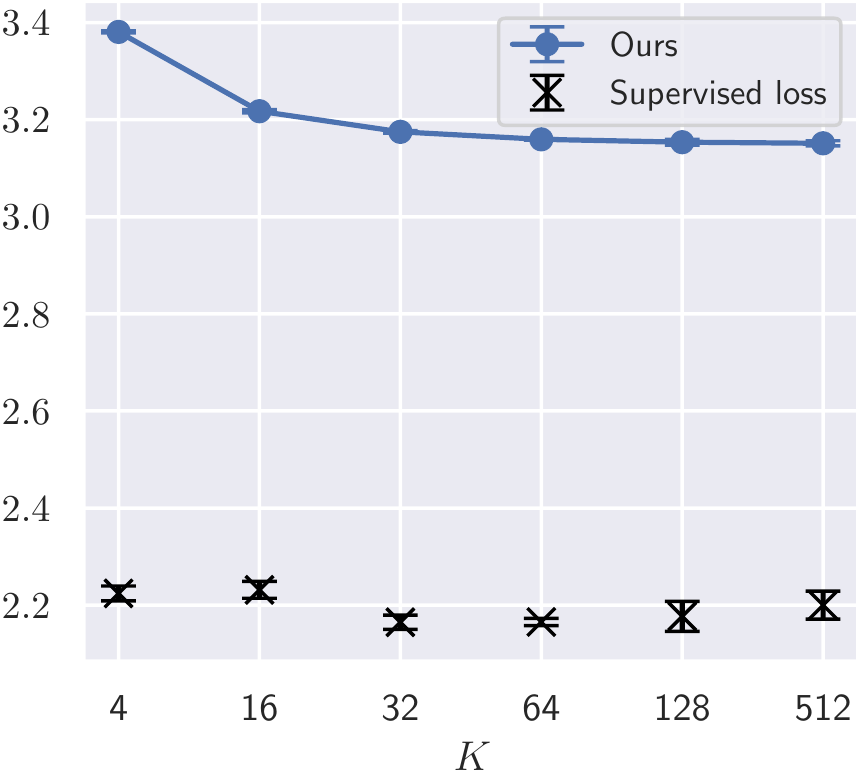}
    \caption{CIFAR-10.}
    \label{figure:cifar10-closer-look}
  \end{subfigure}
  \begin{subfigure}[b]{0.4\textwidth}
    \includegraphics[width=\textwidth]{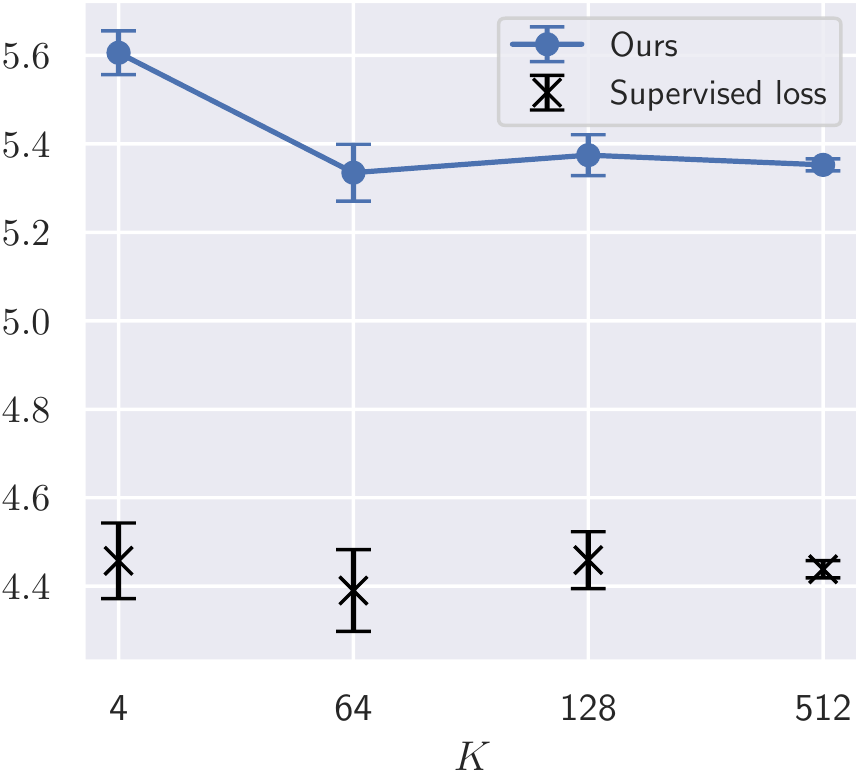}
    \caption{CIFAR-100.}
    \label{figure:cifar100-closer-look}
  \end{subfigure}
  \caption{
  Enlarged \cref{figure:upper_bound_comparison_cifar10} for the detailed comparison between the proposed bound and the supervised loss on CIFAR-10/100 datasets.
  All value is an averaged value among three runs with a different random seed.
  Error bar indicates the standard deviation.
  }
  \label{figure:cifar-closer-look}
\end{figure}

\Cref{figure:cifar-closer-look} shows the enlarged version of \cref{figure:upper_bound_comparison_cifar10} and the same plot using CIFAR-100.
This figure focuses on the detailed comparison between the test datasets' empirical supervised loss values and theoretical bounds.
For both CIFAR-10/100 datasets, there were almost no changes in the supervised loss as $K$ varied, and the losses were slightly larger in the region where $K$ was small.
These results are consistent with the theoretical estimation of the upper bounds (solid lines).

\subsection{Details of \texorpdfstring{\cref{figure:cifar-epoch}}{Figure~\ref{figure:cifar-epoch}}}
During minimization of the contrastive loss to learn $\fbf$ in \cref{sec:contrastive-learning-details}, we saved the model's weight at every $\num{200}$ epochs.
We reported the test mean supervised accuracy using $\fbf$ that maximized validation accuracy among different learning rate values.

\subsection{Additionally Used Libraries}
In our experiments,
we also used scikit-learn~\citep{JMLR:v12:pedregosa11a} for train/val/test data splits.
We created all plots by using matplotlib~\citep{Hunter2007matplotlib} and seaborn~\citep{Waskom2021seaborn} via pandas~\citep{pandas2020} except for \cref{figure:feasible_region}.
We managed our experiments' configuration using hydra~\citep{Yadan2019Hydra} and experimental results using Weights \& Biases~\citep{wandb}.
For effective parallelized execution of our experimental codes, we use GNU Parallel~\citep{tange_2021_5523272}.

\end{document}